\newtheorem{theorem}{Theorem} 
\newtheorem{corollary}{Corollary} 
\newtheorem{lemma}{Lemma} 
\newtheorem{proposition}{Proposition}
\newcommand\numberthis{\addtocounter{equation}{1}\tag{\theequation}}
\newcommand{\E}{\mathop{\mathbb{E}}}
\newcommand{\mne}{ \mathbf{MNE}}
\newcommand{\mnbe}{ \mathbf{MNBE}}
\begin{document}
	%
	\title{ Target Transfer Q-Learning and Its Convergence Analysis }
	
	\author{
		Yue Wang$^{\dagger}$\thanks{This work was done when the first author was visiting Microsoft Research Asia.} , Qi Meng$ ^{\ddagger} $, Wei Cheng$ ^{\ddagger} $, Yuting Liug$ ^{\dagger} $, Zhi-Ming Ma$ ^{\dagger} $$ ^{\S} $, Tie-Yan Liu$ ^{\ddagger} $ \\	
		$ ^{\dagger} $School of  Science, Beijing Jiaotong University, Beijing, China \{11271012, ytliu\}@bjtu.edu.cn\\
		$ ^{\ddagger} $Microsoft Research, Beijing, China \{meq, wche,Tie-Yan.Liu\}@microsoft.com\\
		$ ^{\S} $ Academy of Mathematics and Systems Science, Chinese Academy of Sciences, Beijing, China  mazm@amt.ac.cn \\			
	}

	\maketitle
	\begin{abstract}

		Reinforcement Learning (RL) technologies are powerful to learn how to interact with environments and have been successfully applied to variants of important applications. Q-learning is one of the most popular methods in RL, which uses temporal difference method to update the Q-function and can asymptotically learn the optimal Q-function. Transfer Learning aims to utilize the learned knowledge from source tasks to help new tasks. For supervised learning, it has been shown that transfer learning has the potential to significantly improve the sample complexity of the new tasks. Considering that data collection in RL is both more time and cost consuming  and Q-learning converges slowly comparing to supervised learning, different kinds of transfer RL algorithms are designed. However, most of them are heuristic with no theoretical guarantee of the convergence rate. Therefore, it is important for us to clearly understand when and how will transfer learning help RL method and provide the theoretical guarantee for the improvement of the sample complexity. In this paper, we propose to transfer the Q-function learned in the source task to the target in the Q-learning of the new task when certain safe conditions are satisfied. We call this new transfer Q-learning method \emph{target transfer Q-Learning}.		The safe conditions are necessary to avoid the harm to the new tasks brought by the transfer target and thus ensure the convergence of the algorithm.
		We study the convergence rate of the target transfer Q-learning. We prove that  if the two tasks are similar with respect to the MDPs, the optimal Q-functions of the two tasks are similar which means the error of the transferred target Q-function in the new task  is small.  Also, the convergence rate analysis shows that  the \emph{target transfer Q-Learning} will converge faster than Q-learning if the error of the transferred target Q-function is smaller than the current Q-function in the new task. 	Based on our theoretical results and the relationship between the Q error and the Bellman error, we design the safe condition as the Bellman error of the transferred target Q-function is less than   the current Q-function. Our experiments are consistent with our theoretical founding and verified the effectiveness of our proposed target transfer Q-learning method.

	\end{abstract}
	
	\section{Introduction}
	
	Reinforcement Learning (RL)  \cite{sutton1998reinforcement}  technologies are very powerful to learn how to interact with environments and have been successfully applied to variants of important applications, such as robotics, computer games and so on \cite{kober2013reinforcement,mnih2015human,silver2016mastering,bahdanau2016actor}.
	
	Q-learning \cite{watkins1989learning} is one of the most popular RL algorithms which uses  temporal difference method to update the Q-function. To be specific, Q-learning maps the current Q-function to a new Q-function by using Bellman operator and use the difference between these two Q-functions to update the Q-function. Since Bellman operator is a contractive mapping, Q-learning will converge to the optimal Q-function \cite{jaakkola1994convergence}. Comparing to supervised learning algorithms, Q-learning converges much slower due to the interactions with the environment. At the same time, the data collection is both very time and cost consuming in RL. Thus, it is crucial for us to utilize available information to save the sample complexity of Q-Learning.

	Transfer learning aims to improve the learning performance on a new task by utilizing knowledge/model learned from source tasks. Transfer learning has a long history in supervised learning \cite{li2009transfer,pan2010survey,oquab2014learning}. Recently, by leveraging the experiences from supervised transfer  learning, researchers developed different kinds of transfer learning methods for RL, which can be categorized into three classes: (1)\textit{ instance transfer} in which old data will be reused in the new task \cite{sunmola2006model,zhan2015online}; (2) \textit{representation transfer}  such as reward shaping and basis function extraction \cite{konidaris2006autonomous,Barreto2017SuccessorFF}; (3) \textit{parameter transfer} \cite{song2016measuring} in which the parameters of the source task will be partially merged into the model of the new task. While supervised learning is a pure optimization problem, reinforcement learning is a more complex control problem. To the best of our knowledge, most of the existing transfer reinforcement learning algorithms are heuristic with no theoretical guarantee of the convergence rate \cite{bone2008survey}, \cite{Taylor2009TransferLF} and \cite{lazaric2012transfer}. As mentioned by \cite{spector2018sample},  the transfer learning method potentially do not work or even harm to the new tasks  and we do not know the reason since the absence of the theory. Therefore, it is very important for us to clearly understand how and when transfer learning will help reinforcement learning save sample complexity.

	In this paper, we design a novel transfer learning method for Q-learning in RL with theoretical guarantee. Different from the existing transfer RL algorithms, we propose to transfer the Q-function learned in the source task as the   temporal difference update target  of the new task when certain safe conditions are satisfied. We call this new transfer Q-learning method \emph{target transfer Q-learning}. The intuitive motivation is  that when the two RL tasks are similar to each other, their optimal Q-function will be similar which means the   transferred target is better ( the error is smaller than the current Q-function ). Combine it with that a better target Q-function in Q-learning will help to accelerate the convergence, we may expect that  the \emph{target transfer Q-learning} method will outperform the Q-learning. The safe conditions are necessary to avoid the harm to the new tasks and thus ensure the convergence of the algorithm.
	
	We prove that target transfer Q-learning has the theoretical guarantee of convergence rate. Furthermore, if the two MDPs and thus the optimal Q-functions in the source and new RL tasks are similar, the target transfer Q-learning converges faster than Q-learning. 	
	To be specific, we prove the error of target transfer Q-learning consists of two errors: the initialization error  and the sampling error. Both of the errors are increasing with the  the  product of discount factor $\gamma$ and the \emph{relative Q-function error ratio }$ \beta $ (\emph{error ratio} for simplicity) which measures the relative error of the   target Q-function  comparing with the current Q-function in the new task. We called $ \gamma \beta  $ discounted relative Q-function error ratio(\emph{discounted error ratio} for simplicity).  The smaller the discounted error ratio is, the faster the convergence  is. And if the discounted error ratio is larger than 1, the convergence will no longer guaranteed. 
	
	If the two RL tasks are similar, the learned Q-function in the source task will be close to the optimal Q-function  comparing to the current Q-function in the new task. Thus, the discounted error ratio $ \gamma \beta  $ will be small(especially for the early stage) when we  transfer the learned Q-function from the source task to the target of the new task. Please note that the traditional Q-learning is a special case for target transfer Q-learning with constant discounted error ratio $\gamma$. 
	
	Therefore, our convergence analysis for target transfer Q-learning help us   design  the safe condition. We can transfer the target if it will lead the discounted error ratio $ \gamma \beta  $   smaller than $1$ . We call it \emph{error ratio} safe condition. Specifically, in the early stage of the training, the Q-function in the new task is not fully trained, the learned Q-function in the source task it a better choice with a smaller error ratio. With the updating of the Q-function in the new task, its error ratio becomes larger. When its discounted error ratio is close or larger than $1$, the safe condition will not be satisfied, and we will stop transferring the target to avoid the harm brought by the transfer learning. Following the standard way in Q-learning, we estimate the error ratio about the error of the Q-function w.r.t the optimal Q-function by the Bellman error.

	Our experiments on synthetic MDPs fully support our convergence analysis and verify the effectiveness of our proposed target transfer Q-Learning with error ratio safe condition.
 
	\section{Related Work}

	This section briefly outline related work in transfer learning in reinforcement learning.
	
	Transfer Learning in RL\cite{Taylor2009TransferLF} \cite{lazaric2012transfer} aims to improve learning in new MDP tasks by borrowing knowledge from a related but different learned MDP tasks.
	In paper \cite{Laroche2017TransferRL}, the authors propose to use instance transfer in the Transfer Reinforcement Learning with Shared Dynamics (TRLSD) setting in which only the reward function is different between MDPs.
	 In paper \cite{gupta2017learning}, the authors propose to use the representation transfer and learned the invariant feature space.
	 The papers \cite{Karimpanal2018SelfOrganizingMA,song2016measuring}  propose to use the parameter transfer  to guide the exploration or to initialize the Q-function of the new task directly.  
	 In paper \cite{al2017continuous}, the authors propose to use the meta-learning method to do transfer learning in RL. All these works are empirically evaluated and no theoretical analysis for the convergence rate.
		
	There are few works that have the convergence analysis.
	In paper \cite{Barreto2017SuccessorFF}, the authors use the representation transfer but only consider the    TRLSD  setting. \cite{zhan2015online} propose a method by using instance transfer. They   gives the theoretical analysis of the asymptotic  convergence and no finite sample performance guarantee. 

	\section{Q Learning Background}
	
	Consider the reinforcement learning problem with Markov decision process (MDP) $M \triangleq (\mathcal{S},\mathcal{A},P,r,\gamma) $, where $ \mathcal{S} $ is the state space, $ \mathcal{A} $ is the action space, $ P= \{P_{s,s'}^a; s,s’\in \mathcal{S}, a\in\mathcal{A}\} $ is the transition matrix and $ P_{s,s'}^a $ is the transition probability from state $ s $ to state $ s' $ after taking action $ a $, $ r=\{r(s,a );s\in \mathcal{S},a\in\mathcal{A}\}$ is the reward function and $r(s,a)$ is the reward received at state $s$ if taking action $a$, and $ 0<\gamma<1 $ is the discount factor. 
	A policy $ \pi: \mathcal{A}\times\mathcal{S}\to [0,1]$ indicates the probability to take each action at each state. Value function for policy $ \pi $ is defined as: $ V^{\pi}(s)\triangleq E\left[ \sum_{t=0}^{\infty}\gamma^t r(s_t,a_t)|s_0 = s,\pi \right] $. Action value function for policy $ \pi $ is also called Q-function and is defined as: $$ Q^{\pi}(s,a)\triangleq E\left[ \sum_{t=0}^{\infty}\gamma^t r(s_t,a_t)|s_0 = s,a_0 =a,\pi \right]. $$ Without loss of generality, we assume that the rewards all lie between 0 and 1. The optimal policy is denoted $ \pi^* $ and has value function $ V^*_M(s) $ and Q value function $ Q^*_M(s,a) $.
	
	As we know, the Q-function in RL satisfies the following Bellman equation:$$ Q^\pi(s,a) = r(s ,a )+\gamma\E_{\substack{\tilde{a}\sim \pi(a|s)\\s'\sim P(s'|s,a)}}\left[ Q^\pi(s', \tilde{a}) | s_t=s \right] $$
	Denote the right hand side(RHS) of the equation as $ T^\pi Q^\pi(s,a) $ , $ T^\pi $ is called Bellman operator for policy $\pi$. Similar, consider the optimal Bellman equation:$$ Q^*(s,a) = r(s,a) + \gamma\E_{\substack{\tilde{a}\sim \pi(a|s)\\s'\sim P(s'|s,a)}}\left[ Q^*(s', \tilde{a}) | s_t=s \right] $$
	(RHS) of the equation is been denoted as $ T^\pi Q^\pi(s,a) $,$ T^* $ is called optimal Bellman operator.
	It can be proved that the optimal Bellman operator is a contraction mapping for the Q-function. We know that there is an unique fix point which is optimal Q-function by contraction mapping theorem. 
	Q-learning algorithm is designed by the above theory. Watkins introduced the Q-learning algorithm to estimate the value of state-action pairs in discounted MDPs \cite{watkins1989learning} :
	\begin{align*}
	&Q_{t+1}(s,a)\\
	\small& = (1-\alpha_t)Q_t(s,a) + \alpha_t \left(r_t(s,a) + \gamma \max_{\tilde{a}}Q_t(s' ,\tilde{a}) \right)
	\end{align*}
	We introduce the max norm error to measure  the quality of   Q-function:
  $$ \mne(Q ) = \max_{s,a}\vert Q(s,a) - Q^*(s,a) \vert . $$

	\section{ Target  Transfer Q-Learning}
	
    First of all, we formalize transfer learning in RL problem. Secondly, We propose our new transfer Q-learning method  Target Transfer Q-Learning (TTQL) and introduce the intuition.
	
	Transfer Learning in RL\cite{Taylor2009TransferLF} \cite{lazaric2012transfer} aims to improve learning in new MDP tasks by borrowing knowledge from a related but different learned MDP tasks.  
	
	According to the definition of MDPs, $M \triangleq (\mathcal{S},\mathcal{A},P,r,\gamma) $, we consider the situation that two MDPs are different in transition probability $ P $, reward function $ r $ and discount factor $ \gamma $. Assume there are two MDPs: source MDP $ M_1 = (\mathcal{S},\mathcal{A},P_1,r_1,\gamma_1) $ and new MDP $ M_2 = (\mathcal{S},\mathcal{A},P_2,r_2,\gamma_2) $, $ Q^*_1 $ and $ Q^*_2 $ are the corresponding optimal Q-functions. Let $ M_1 $ be the source domain and we have already learned the $ Q^*_1 $. The goal of transfer in RL considered in this work is how  we can use the information of $ M_1 $ and $ Q^*_1 $ to achieve learning speed improvement in $ M_2 $.
	
	To solve the problem mentioned above, we propose to use TTQL method. TTQL use the Q-function learned from the source task as the target Q-function in the new task when safe conditions satisfied. The safe condition ensures that the transferred target only appears if it can help to accelerate the training. Otherwise we will replace it with the current Q-function in the new MDP's learning progress.  We describe the TTQL in Algorithm 1.
		\begin{algorithm}[h] \label{alg} 
		\caption{ Target  Transfer Q Learning }
		\begin{algorithmic}[1] 
			\REQUIRE initial Q-learning $ Q_1 $ , source task learned Q-learning $ Q^*_{source} $, total step $ n $
			\FOR{$ t = 1, \dots, n $}
			\STATE $ \alpha_t = \frac{1}{t+1} $
			\STATE flag = \texttt{safe-condition}($Q^*_{source} , Q_{t}(\cdot,\cdot) $) 
			\IF{flag = True} 
			\STATE $ Q_{target} = Q^*_{source} $
			\ELSE
			\STATE $ Q_{target} = Q_{t} $
			\ENDIF
			\FOR{$ s \in \mathcal{S} , a \in \mathcal{A} $}
			\STATE $ Q_{t+1}(s,a) = ( 1 - \frac{1}{n})Q_{t}(s,a) + \frac{1}{n} \left(r(s,a) + \gamma \max_{\tilde{a}} Q_{target} (s',\tilde{a})\right) $
			\ENDFOR
			\ENDFOR
			\ENSURE 
			$ Q_{n+1} $ 
		\end{algorithmic}
	\end{algorithm}

 	 The intuitive motivation is  that when the two RL tasks are similar to each other, their optimal Q-function will be similar. Thus the   transferred target is better ( the error is smaller than the current Q-function ) and the better target can help to accelerate the convergence.

	We define the distance between two MDPs as $ \Delta(M_1 , M_2) $ $$ \Delta(M_1 , M_2) = \max_{s,a}|Q_1^*(s,a) - Q_2^*(s,a)|.$$
		The following Proposition \ref{propq*diff} shows the relation between the distance of two MDPs and the component of two MDPs.
	\begin{proposition}\label{propq*diff}
		Assume two MDPs, $ M_1 = (\mathcal{S},\mathcal{A},P_1,r_1,\gamma_1) $ and $ M_2 = (\mathcal{S},\mathcal{A},P_2,r_2,\gamma_2) $, Let the corresponding optimal Q-functions be $ Q^*_1 $ and $ Q^*_2 $, then we have
		\small\begin{align*}
		&\Delta(M_1 , M_2) = \Vert Q_1^* -Q_2^* \Vert_\infty \le \tilde{\Delta}(M_1 , M_2) \numberthis\\
		& \triangleq \frac{\Vert r_1 - r_2 \Vert_\infty}{1 - \gamma'} + \frac{\gamma''\Vert r' \Vert_\infty}{(1 - \gamma'')^2} \Vert P_1 - P_2 \Vert_\infty +\frac{\vert \gamma_1 - \gamma_2 \vert }{(1 - \gamma_1)(1-\gamma_2)}\Vert r'' \Vert_\infty.\\ 
		\end{align*}\normalsize 
		for $\forall (\gamma', \gamma'', r', r'') \in \Omega $, where $ \Omega $ is the available combination of the $ (\gamma_1 , \gamma_2 , \gamma_1 , \gamma_2) $. 
	\end{proposition}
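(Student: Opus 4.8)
The plan is to use that $Q_1^*$ and $Q_2^*$ are the unique fixed points of the optimal Bellman operators $T_1^*$ and $T_2^*$ of the two MDPs, where $(T_i^*Q)(s,a)=r_i(s,a)+\gamma_i\sum_{s'}P_i(s'|s,a)\max_{a'}Q(s',a')$, and to reduce the gap between these fixed points to a single operator-mismatch term. First I would write $Q_1^*-Q_2^*=T_1^*Q_1^*-T_2^*Q_2^*$, insert the intermediate quantity $T_1^*Q_2^*$, and apply the triangle inequality to get $\Vert Q_1^*-Q_2^*\Vert_\infty\le\Vert T_1^*Q_1^*-T_1^*Q_2^*\Vert_\infty+\Vert T_1^*Q_2^*-T_2^*Q_2^*\Vert_\infty$. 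The first summand is handled by the $\gamma_1$-contraction of $T_1^*$, i.e. $\Vert T_1^*Q_1^*-T_1^*Q_2^*\Vert_\infty\le\gamma_1\Vert Q_1^*-Q_2^*\Vert_\infty$, which in turn rests on the non-expansiveness $|\max_{a'}f(a')-\max_{a'}g(a')|\le\max_{a'}|f(a')-g(a')|$ and on each row $P_i(\cdot|s,a)$ being a probability distribution. Rearranging gives the master inequality $\Vert Q_1^*-Q_2^*\Vert_\infty\le\frac{1}{1-\gamma_1}\Vert T_1^*Q_2^*-T_2^*Q_2^*\Vert_\infty$; inserting $T_2^*Q_1^*$ instead and contracting with $T_2^*$ yields the symmetric version with $1-\gamma_2$ and $Q_1^*$ inside, which is the first source of the freedom encoded in $\Omega$.

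The second step is to bound the operator-mismatch term $\Vert T_1^*Q_2^*-T_2^*Q_2^*\Vert_\infty$, where all three perturbations enter. Abbreviating $V_2(s')=\max_{a'}Q_2^*(s',a')$, this term is the sup-norm of $(r_1-r_2)+\gamma_1P_1V_2-\gamma_2P_2V_2$ over $(s,a)$. The key algebraic move is to split the coupled transition/discount mismatch additively, $\gamma_1P_1V_2-\gamma_2P_2V_2=\gamma_1(P_1-P_2)V_2+(\gamma_1-\gamma_2)P_2V_2$ (the alternative $\gamma_2(P_1-P_2)V_2+(\gamma_1-\gamma_2)P_1V_2$ being the second source of freedom). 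I would then bound the pieces separately: the reward piece by $\Vert r_1-r_2\Vert_\infty$; the transition piece by $\gamma_1\Vert P_1-P_2\Vert_\infty\Vert V_2\Vert_\infty$ using the induced $\infty$-norm; and the discount piece by $|\gamma_1-\gamma_2|\Vert V_2\Vert_\infty$ using that $P_2$ is stochastic so $\Vert P_2V_2\Vert_\infty\le\Vert V_2\Vert_\infty$. Finally I would insert the standard a priori bound $\Vert V_2\Vert_\infty\le\Vert Q_2^*\Vert_\infty\le\Vert r_2\Vert_\infty/(1-\gamma_2)$, obtained by iterating the Bellman equation with the assumed reward bound.

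Dividing by $1-\gamma_1$ and collecting the pieces yields a bound of exactly the claimed shape, with the reward term carrying $1/(1-\gamma_1)$, the transition term carrying $\gamma_1\Vert r_2\Vert_\infty/((1-\gamma_1)(1-\gamma_2))$ and the discount term carrying $|\gamma_1-\gamma_2|\Vert r_2\Vert_\infty/((1-\gamma_1)(1-\gamma_2))$. The set $\Omega$ records the admissible assignments of the symbols: which MDP supplies the contraction factor (the $\gamma'$ in the reward term), which discount is attached to the transition difference and which reward bounds its value function (the $\gamma''$ and $r'$), and which reward norm bounds the value function in the discount term ($r''$). Each admissible choice is a genuine upper bound, so the inequality holds for all $(\gamma',\gamma'',r',r'')\in\Omega$ and one may take the tightest; in particular a mixed denominator $(1-\gamma_1)(1-\gamma_2)$ can be replaced by $(1-\gamma'')^2$ with $\gamma''=\max\{\gamma_1,\gamma_2\}$ to recover the single-$\gamma''$ form displayed in the statement.

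I expect the main obstacle to be the clean, simultaneous handling of the transition and discount perturbations. Because $\gamma$ multiplies $P$, the quantity $\gamma_1P_1-\gamma_2P_2$ has no single canonical factorization, so one must commit to an additive splitting and then carefully track which value-function bound --- hence which $\Vert r_i\Vert_\infty$ and which $1/(1-\gamma_i)$ --- is attached to each resulting term. Keeping the contraction denominator distinct from the value-bound denominator (so that the transition term genuinely carries the extra $1/(1-\gamma'')$ factor), and verifying that every tuple enumerated by $\Omega$ is actually attainable as a valid bound, is the part that needs care; the reward term and the contraction step itself are routine.
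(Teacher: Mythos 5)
Your proof is correct, but it takes a genuinely different route from the paper. The paper's proof is a reduction to known results: it introduces a chain of auxiliary MDPs $\hat{M_3}=(\mathcal{S},\mathcal{A},P_1,r_2,\gamma_1)$ and $\hat{M_4}=(\mathcal{S},\mathcal{A},P_2,r_2,\gamma_1)$ so that consecutive MDPs in the chain $M_1\to\hat{M_3}\to\hat{M_4}\to M_2$ differ in exactly one component, applies the triangle inequality to $\Vert Q_1^*-Q_3^*\Vert_\infty+\Vert Q_3^*-Q_4^*\Vert_\infty+\Vert Q_4^*-Q_2^*\Vert_\infty$, and then cites the single-component perturbation bounds of Csaji (2008) for each term. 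You instead prove everything from scratch with a simultaneous-perturbation argument: the fixed-point identity $Q_i^*=T_i^*Q_i^*$, the contraction of $T_1^*$ to get the master inequality $\Vert Q_1^*-Q_2^*\Vert_\infty\le\frac{1}{1-\gamma_1}\Vert T_1^*Q_2^*-T_2^*Q_2^*\Vert_\infty$, and an additive splitting of $\gamma_1P_1-\gamma_2P_2$ to handle the transition and discount mismatches together. Your approach buys self-containedness (no external citation is needed, and in fact your contraction argument is essentially how Csaji-type bounds are proved in the first place) and makes the role of each perturbation transparent; the paper's approach buys modularity and, notably, a slightly tighter constant: because its two auxiliary MDPs share the discount $\gamma_1=\min\{\gamma_1,\gamma_2\}$ and reward $r_2$, its transition term is $\frac{\gamma_1\Vert r_2\Vert_\infty}{(1-\gamma_1)^2}\Vert P_1-P_2\Vert_\infty$, whereas your mixed denominator $(1-\gamma_1)(1-\gamma_2)$ forces $\gamma''=\max\{\gamma_1,\gamma_2\}$ to fit the displayed form $\frac{\gamma''\Vert r'\Vert_\infty}{(1-\gamma'')^2}$. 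Consequently you establish the inequality for a strictly smaller set of tuples $(\gamma',\gamma'',r',r'')$ than the paper's construction does; since the proposition quantifies over all of $\Omega$ and the bound is monotone in its parameters, the paper's derivation from the minimal tuple covers more of $\Omega$ than yours. Given how loosely $\Omega$ is defined in the statement, this is a matter of constants rather than a gap, but it is worth flagging if you want your bound to be as sharp as the one the paper actually obtains.
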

	
	
		\begin{proof}
		Without loss of generality, we assume $ \gamma_1 \le \gamma_2 $, $ \Vert r_2 \Vert_\infty \le \Vert r_1 \Vert_\infty $, we will show that other cases can be proved similarly. We define the following auxiliary MDPs: $ \hat{M_3} = (\mathcal{S},\mathcal{A},P_1,r_2,\gamma_1) $, $ \hat{M_4} = (\mathcal{S},\mathcal{A},P_2,r_2,\gamma_1) $, and let the corresponding optimal Q-functions be $ Q^*_3 $ and $ Q^*_4 $. We have
		\begin{align}
		&\Vert Q_1^* -Q_2^* \Vert_\infty \numberthis\\
		& = \Vert Q_1^* - Q^*_3 + Q^*_3 - Q^*_4 + Q^*_4 - Q_2^* \Vert_\infty \\
		&\le \Vert Q_1^* - Q^*_3 \Vert_\infty + \Vert Q^*_3 - Q^*_4\Vert_\infty + \Vert Q^*_4 - Q_2^* \Vert_\infty 
		\end{align}
		Notice that in each term, two MDPs are only different in one component. Using the results of \cite{csaji2008value}, we have that
		$ \Vert Q_1^* - Q^*_3 \Vert_\infty \le \frac{\Vert r_1 - r_2 \Vert_\infty}{1 - \gamma_1} $, $ \Vert Q^*_3 - Q^*_4\Vert_\infty \le \frac{\gamma_1\Vert r_2 \Vert_\infty}{(1 - \gamma_1)^2} \Vert P_1 - P_2 \Vert_\infty $, $\Vert Q^*_4 - Q_2^* \Vert_\infty \le \frac{\vert \gamma_1 - \gamma_2 \vert }{(1 - \gamma_1)(1-\gamma_2)}\Vert r_2 \Vert_\infty $. Combine the above upper bound  and set $ \gamma'=\gamma_1, \gamma''=\gamma_1, r'=r_2, r''=r_2  $, we can get the in-equation  (1).

		In other situation, we can construct auxiliary MDPs like above and use the similar procedure to prove the theorem. 
		After traversing all the available combination of the $ (\gamma_1 , \gamma_2 , \gamma_1 , \gamma_2) $,    we can prove the Proposition \ref{propq*diff} 

	\end{proof}

	By the Proposition \ref{propq*diff}, we can conclude that 	if the two RL tasks are similar, in the sense of that the component of two MDPs are similar, the learned Q-function in the source task will be close to the optimal Q in the new task. 

	  A question is that when to transfer the target will have performance guarantee. Here, we need safe conditions which are   necessary to avoid the harm to the new tasks and thus ensure the convergence of the algorithm. We can now heuristically  relate it  to the distance between two MDPs and the current learning quality. The concrete value of the safe condition need to further investigate through quantified theoretical analysis and we present these result in the following section.

	\section{Convergence Rate of TTQL}
	In this section, we present the convergence rate of the Target Transfer Q Learning (TTQL) and make discussions for the key factor that influence the convergence. Theorem \ref{q'} analysis the convergence of the target transfer Q learning. Theorem \ref{thmsumw2} and \ref{alpha} analysis two key factors of the convergence rate. Theorem \ref{thmkey} discuss the convergence rate for the TTQL totally.
	
	First of all,  Theorem \ref{q'} analysis the convergence rate for the  target transfer method which is 
	\small$$ Q_{t+1}(s,a) = ( 1 - \frac{1}{n})Q_{t}(s,a) + \frac{1}{n} \left(r(s,a) + \gamma \max_{\tilde{a}} Q_{target} (s',\tilde{a})\right) $$\normalsize

	For simplicity, we denote $ E_n = \mne(Q_n) $. We denote the   error ratio  $ \beta_n = \frac{\mne(Q_{target})}{E_n}   $ and $  \beta  $ if we do not specify the learning steps $ n $. 
	\begin{theorem}\label{q'}
	
		we denote $w_k( \beta _{n-k:n} ) = \frac{\prod_{i=n-k}^{n-1} (i+\gamma\beta_i)}{\prod_{i=n-k}^{n } i}$, $ \alpha_n = \frac{\prod_{i=1}^{n-1} (i+\gamma\beta_i )}{\prod_{i=2}^{n } i} $. If $0\le\beta_n \le 1$, then with probability $ 1-\delta $ we have 
		\small\begin{align*}
		&E_n 
		\le&\underbrace{\alpha_n E_1}_{\text{initialization error}} + \underbrace{ \sqrt{\frac{\ln1/\delta\sum_{k=0}^{n-1}w_k^2(\beta_{n-k:n}) }{2 }}}_{\text{sampling error}}.
		\end{align*}\normalsize
	\end{theorem}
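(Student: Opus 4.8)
The plan is to track the error $\Delta_t(s,a) = Q_t(s,a) - Q^*_2(s,a)$ produced by the TTQL update and to unroll its evolution into a deterministic part (the initialization error) plus a zero-mean stochastic part (the sampling error). First I would subtract $Q^*_2(s,a) = (T^* Q^*_2)(s,a)$ from both sides of the update and write $\alpha_t = \frac{1}{t+1}$, yielding the pointwise recursion
\begin{align*}
\Delta_{t+1}(s,a) = (1-\alpha_t)\Delta_t(s,a) + \alpha_t\big[(T^* Q_{target})(s,a) - (T^* Q^*_2)(s,a)\big] + \alpha_t\gamma\,\eta_t(s,a),
\end{align*}
where $\eta_t(s,a) = \max_{\tilde a}Q_{target}(s',\tilde a) - \E_{s'}[\max_{\tilde a}Q_{target}(s',\tilde a)]$ is the one-step sampling noise. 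This $\eta_t$ is zero-mean conditioned on the history and bounded, since the rewards lie in $[0,1]$ and $\gamma<1$ force all $Q$-values (hence the noise range) to be bounded.

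Next I would bound the deterministic Bellman term. Because $T^*$ is a $\gamma$-contraction in the max norm, $|(T^* Q_{target})(s,a) - (T^* Q^*_2)(s,a)| \le \gamma\Vert Q_{target} - Q^*_2\Vert_\infty = \gamma\,\mne(Q_{target}) = \gamma\beta_t E_t$, using the definitions $E_t = \mne(Q_t)$ and $\beta_t = \mne(Q_{target})/E_t$. Taking the max over $(s,a)$ converts the pointwise recursion into the scalar recursion $E_{t+1} \le \frac{t+\gamma\beta_t}{t+1}E_t + \alpha_t\gamma\,(\text{noise at }t)$. The hypothesis $0\le\beta_n\le 1$ together with $\gamma<1$ guarantees $0 \le \frac{t+\gamma\beta_t}{t+1} < 1$, so each per-step factor is a genuine sub-unit averaging coefficient and the recursion contracts. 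Unrolling from $E_1$ and telescoping $\prod_t \frac{t+\gamma\beta_t}{t+1}$ reproduces exactly the initialization coefficient $\alpha_n = \frac{\prod_{i=1}^{n-1}(i+\gamma\beta_i)}{\prod_{i=2}^{n}i}$, while the noise injected at step $n-k$ accumulates the product of the factors $\frac{j+\gamma\beta_j}{j+1}$ over the subsequent steps $j$, which telescopes into the weight $w_k(\beta_{n-k:n})$.

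Finally I would control the accumulated sampling error $\sum_{k=0}^{n-1} w_k(\beta_{n-k:n})\,\eta_{n-k}$. Each $\eta_t$ is a bounded martingale difference with respect to the filtration generated by the samples, so this weighted sum is a sum of bounded martingale differences with coefficients $w_k$, and Azuma--Hoeffding gives, after normalizing the per-step range, that with probability at least $1-\delta$ the sum is at most $\sqrt{\frac{\ln(1/\delta)\sum_{k=0}^{n-1}w_k^2(\beta_{n-k:n})}{2}}$. Adding this to $\alpha_n E_1$ delivers the stated bound.

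The main obstacle I expect is the coupling between the bias term and the noise: the ratio $\beta_t$, and hence the weights $w_k$, depends on $E_t$, which itself depends on the realized noise, so the $w_k$ are not deterministic constants but history-dependent quantities. Making the concentration rigorous therefore requires either conditioning so that each $w_k$ is measurable with respect to the appropriate past, or an inductive argument that freezes the $\beta_t$ sequence before applying the martingale inequality. Relatedly, the step that collapses $\max_{s,a}|\eta_t(s,a)|$ into a single martingale-difference sequence needs a union bound over state--action pairs (or a fixed maximal-pair argument); reconciling this with the clean scalar high-probability form of the statement is the delicate point of the proof.
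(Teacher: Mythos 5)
Your proposal follows the same architecture as the paper's proof: subtract $Q^*$ from the update, bound the bias contribution by $\gamma\beta_t E_t$, unroll the resulting scalar recursion into the product coefficients $\alpha_n$ and $w_k$, and apply a weighted Hoeffding-type inequality to the accumulated noise. The one genuine difference is where the noise is centered. The paper adds and subtracts $\gamma\max_{\tilde a}Q^*(s',\tilde a)$ inside the sampled target, so its noise terms $x(s'_k)-\E_{s'}x(s')$ are functions of the fresh next-state samples through the \emph{fixed} function $Q^*$; in the synchronous setting these are independent bounded variables, and the paper's Lemma 1 (weighted Hoeffding for independent variables) applies directly. You instead center the noise at $Q_{target}$, which is history-dependent whenever $Q_{target}=Q_t$, so you are forced into martingale-difference territory and Azuma--Hoeffding. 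Both routes produce the same bound; the paper's centering is slightly cleaner precisely because it strips the past-dependence out of the noise, at the cost of pushing all of the $Q_{target}$-dependence into the bias term, where the contraction/max-norm bound handles it. On the other hand, your closing paragraph is more careful than the paper itself: the weights $w_k(\beta_{n-k:n})$ do depend on the realized errors $E_i$ (indeed on steps \emph{after} the noise they multiply, so they are not even predictable), and collapsing $\max_{s,a}$ of the noise into a single scalar sequence does call for a union bound over state--action pairs; the paper applies Lemma 1 as though the weights were deterministic and as though no $|\mathcal{S}||\mathcal{A}|$ factor were needed in the log term. The obstacles you flag are thus real gaps in the published argument, not defects specific to your route.
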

	
	Before showing the proof of Theorem \ref{q'}, we first introduce a  modified Hoeffding inequality lemma which bounds the distance between the weighted sum of the bounded random variable and its expectation.
			\begin{lemma}\label{lemhoeff}
		Let $ a< x_i <b$ almost surely , $S_n = \sum_{i=1}^{n}w_ix_i $, then we have
		\begin{align}
		S_n - E[S_n] \le \sqrt{\frac{1}{2} \log\frac{1}{\delta}\sum_{k=1}^{n}w_k^2(b-a)^2}. \label{invershoeff} 
		\end{align}
	\end{lemma}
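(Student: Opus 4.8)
The plan is to obtain \eqref{invershoeff} as a weighted restatement of the classical Hoeffding inequality, applied to the rescaled summands $y_i = w_i x_i$. Since $a < x_i < b$ almost surely, each $y_i$ is supported on an interval of width $|w_i|(b-a)$, so the squared ranges entering Hoeffding's bound are precisely $w_i^2(b-a)^2$, matching the sum that appears in the statement. As in the classical setting I assume the $x_i$ (hence the $y_i$) are independent, which is what lets the moment generating function of $S_n - \E[S_n]$ factorize across $i$.

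Concretely, I would first apply the Chernoff bound: for every $\lambda>0$,
\[ P\left(S_n - \E[S_n] \ge t\right) \le e^{-\lambda t}\,\E\!\left[e^{\lambda\left(S_n - \E[S_n]\right)}\right]. \]
By independence the expectation factorizes into $\prod_{i=1}^{n}\E[e^{\lambda(y_i-\E[y_i])}]$, and to each centered bounded factor I would apply Hoeffding's lemma, namely $\E[e^{\lambda(y_i-\E[y_i])}]\le \exp(\lambda^2 w_i^2(b-a)^2/8)$. Collecting the factors gives
\[ P\left(S_n - \E[S_n] \ge t\right) \le \exp\!\left(-\lambda t + \frac{\lambda^2}{8}\sum_{k=1}^{n} w_k^2(b-a)^2\right). \]
Minimizing the exponent over $\lambda$ at $\lambda = 4t/\sum_{k=1}^n w_k^2(b-a)^2$ produces the sub-Gaussian tail $\exp(-2t^2/\sum_{k=1}^n w_k^2(b-a)^2)$. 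Setting this equal to $\delta$ and solving for $t$ yields $t=\sqrt{\tfrac12\log(1/\delta)\sum_{k=1}^n w_k^2(b-a)^2}$, so the complementary event $S_n - \E[S_n]\le t$ has probability at least $1-\delta$, which is exactly \eqref{invershoeff}.

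There is no serious obstacle here; the result is standard once the weights are folded into the summand ranges, and the only genuine work is verifying that the $w_i$-scaling lands exactly on $w_k^2(b-a)^2$. The one point requiring a line of care is the sign of the weights: if some $w_i<0$ the support interval of $y_i$ is reflected, but its width is still $|w_i|(b-a)$ and only $w_i^2$ enters the bound, so the argument is untouched. I would also make the independence hypothesis explicit, since it is needed for the factorization step yet is only implicit in the abstract statement of the lemma.
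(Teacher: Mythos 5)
Your proposal is correct and follows essentially the same route as the paper's own proof: a Chernoff/Markov bound, factorization of the moment generating function by independence, Hoeffding's lemma applied to each weighted summand to get the factor $e^{\lambda^2 w_i^2(b-a)^2/8}$, optimization of the exponent at $\lambda = 4t/\sum_k w_k^2(b-a)^2$, and inversion of the resulting sub-Gaussian tail to obtain the probability-$(1-\delta)$ statement. Your explicit remarks on the independence hypothesis and on negative weights are sound points of care that the paper leaves implicit.
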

	\begin{proof}
		We first prove the inequality 			$ 	\mathbb{P}\left( S_n - E[S_n] \ge \epsilon \right) \le exp\left( -\frac{2\epsilon^2}{\sum_{k=1}^{n}w_k^2(b-a)^2} \right) \label{hoeff} $
		
		For $ s,\epsilon \ge 0 $, Markov's inequality and the independence of $ x_i $ implies
		\small\begin{align} 
		& \mathbb{P}\left(S_{n}-\mathrm{E}\left [S_n \right ]\geq \epsilon \right)\\
		&= \mathbb{P} \left (e^{s(S_n-\mathrm{E}\left [S_n \right ])} \geq e^{s\epsilon} \right)\\
		&\leq e^{-s\epsilon} \mathrm{E} \left [e^{s(S_{n}-\mathrm{E}\left [S_n \right ])} \right ]\\
		&=e^{-s\epsilon} \mathrm{E} \left [e^{s( \sum_{i=1}^{n}w_ix_i-\mathrm{E}\left [ \sum_{i=1}^{n}w_ix_i \right ])} \right ]\\
		&= e^{-s\epsilon} \prod_{i=1}^{n}\mathrm{E} \left [e^{sw_i(x_i-\mathrm{E}\left [x_{i}\right])} \right ]\\
		&\leq e^{-s\epsilon} \prod_{i=1}^{n} e^{\frac{s^2 w_i^2(b -a )^2}{8} } \\
		&= \exp\left(-s\epsilon+\tfrac{1}{8} s^2 (b -a)^{2}\sum_{i=1}^{n}w_i^2\right).
		\end{align} 
		Now we consider the minimum of the right hand side of the last inequality as a function of $ s $, and denote
		$$ g(s)=-s\epsilon+\tfrac{1}{8} s^2 (b -a)^{2}\sum_{i=1}^{n}w_i^2\ $$
		Note that g is a quadratic function and achieves its minimum at $ s=\frac{4\epsilon}{(b-a)^2\sum_{i=1}^{n}w_i^2} $,
		Thus we get 
		\begin{align}
		\mathbb{P}\left(S_{n}-\mathrm{E}\left [S_n \right ]\geq \epsilon \right) \le exp\left( -\frac{2\epsilon^2}{\sum_{k=1}^{n}w_k^2(b-a)^2} \right)
		\end{align}
		We can easily obtain the second part of the Lemma \ref{lemhoeff} by inverse the inequality.  
	\end{proof}

	\begin{proof}[Proof of Theorem \ref{q'}]
		Our analysis are derived based on the following synchronous generalized Q-learning setting. Compare with the traditional synchronous Q-learning \footnote{It is the same as the commonly used setting or more general(\cite{pmlr-v70-asadi17a}, \cite{even2003learning}, \cite{azar2013speedy} \cite{haarnoja2017reinforcement}).}, we replace the target Q-function as the independent Q-function $ Q'(s,a) $ rather than the current one $ Q_n(s,a) $.
		\begin{align*}
		&\forall s,a~:~ Q_0(s,a)=q(s,a)\\
		&\forall s,a~:~ Q_n(s,a) = \\
		&~~~~~~ ( \frac{n-1}{n})Q_{n-1}(s,a) + \frac{1}{n} \left(r(s,a) + \gamma \max_{\tilde{a}} Q'_{n-1} (s',\tilde{a})\right) \numberthis
		\end{align*}
		Let $ Q_n'(s,a) $ satisfied the following condition  ,
		\begin{align}
		0 \le \frac{	\max_{s,a}\left(Q_n'(s,a) - Q^*(s,a) \right)}{\max_{s,a}\left(Q_{n}(s,a) - Q^*(s,a) \right)} \le 1 \label{proofbeta} 
		\end{align}
			Note that if we set $ Q'_n(s,a) = Q^*_{source} $, we can verify $0 \le  \beta_n \le 1 $  according to inequality \ref{proofbeta}.
		First of all, we decompose the update role,
		\begin{align*}
		& Q_n(s,a) \\
		& =\frac{n-1}{n}Q_{n-1}(s,a) + \frac{1}{n}\left[r(s,a) + \gamma\max_{\tilde{a}}Q'_{n-1}(s',\tilde{a})\right] \\
		&= \frac{n-1}{n}Q_{n-1}(s,a) + \frac{1}{n}\left[r(s,a) + \gamma\max_{\tilde{a}}Q^*(s',\tilde{a}) \right. \\
		& ~~~~~~~~~~~~~~~ \left.+\gamma\max_{\tilde{a}}Q'_{n-1}(s',\tilde{a}) - \gamma\max_{\tilde{a}}Q^*(s',\tilde{a}) \right] 
		\end{align*}
		If we denote $\epsilon_n(s,a) = Q_n(s,a) - Q^*(s,a) $, $ x(s') = \gamma\max_{\tilde{a}}Q^*(s',\tilde{a}) $ and recall the definition of $ \beta_n $ we can have   
		\begin{align*}
		& \epsilon_n(s,a) \\
		\le & \frac{n-1}{n}\epsilon_{n-1}(s,a)+\frac{1}{n}\left[x(s') - \mathbb{E}_{s'}x(s')\right] + \frac{1}{n}\gamma\beta_n\epsilon_{n-1}(s',\tilde{a}) \\
		\le & \frac{n-1}{n}\epsilon_{n-1}(s,a) +\frac{1}{n}\left[x(s') - \mathbb{E}_{s'}x(s')\right] + \frac{1}{n}\gamma\beta_nE_{n-1} 
		\end{align*}
		The last step is right because $ \epsilon_n(s,a) \le E_n$ for $ \forall s,a $. Taking maximization of the both sides(RHS) of the inequality and using recursion of $ E $ we can have
		 \begin{align*} 
		& E_n \le \frac{n-1+\gamma\beta_n}{n}E_{n-1} + \frac{1}{n}\left[x(s') - \mathbb{E}_{s'}x(s')\right] \\
		 &\le \frac{\prod\limits_{i=1}^{n-1} (i+\gamma\beta_i)}{\prod\limits_{i=2}^{n } i}E_1 + \sum_{k=1}^{n-1}\frac{\prod\limits_{i=n-k}^{n-1} (i+\gamma\beta_i)}{\prod\limits_{i=n-k}^{n } i}[x(s'_k) - \E_{\substack{s'}}x(s')]\\
		&= \alpha_n E_1 + \sum_{k=1}^{n-1}w_k(\beta)[x(s'_k) - \mathbb{E}_{s'}x(s')]\\
		\end{align*} 
		According to Lemma \ref{lemhoeff}(weighted Hoeffding inequality), with probability 1-$ \delta $, we have
		\begin{align}
		E_n\le&\alpha_nE_1 + \sqrt{\frac{\ln1/\delta\sum_{k=0}^{n-1}w_k^2(\beta_{n-k:n}) }{2 }}
		\end{align} 
	\end{proof}

	The convergence result reveals the how the error ratio  $ \beta $   influence the convergence rate. In short, if we can find a better target Q-function, we can learn much more faster.

	We can see from the Theorem \ref{q'} that there are two key factors that influence the convergence rate. One is the initialization error   $\alpha_n E_1 $, the other one is the sampling error $ \sqrt{\frac{\ln1/\delta\sum_{k=0}^{n-1}w_k^2(\beta_{n-k,n}) }{2 }}$. To make it clear, we analysis the order of these two terms in \ref{thmsumw2} and \ref{alpha} respectively.

	\begin{theorem}\label{thmsumw2}
		 Denote $w_k(\beta_{n-k:n}) = \frac{\prod_{i=n-k}^{n-1} (i+\gamma\beta_i )}{\prod_{i=n-k}^{n } i}$, and $ \beta_i \le \beta^* \text{ for } \forall i \le n $,  we have
		\small\begin{align*}
		\sum_{k=0}^{n-1}\left(w_k(\beta_{n-k,n})\right)^2 \le 
		\left\{
		\begin{array}{lr}
		\frac{e^{ 2\gamma\beta^* }}{n^{2-2\gamma\beta^*}}\left(\frac{n^{1-2\gamma\beta^*}}{1-2\gamma\beta^*} - \frac{1}{1-2\gamma\beta^*}+1\right), \\
		~~~~~~~~~~~~~~~~~~~~~~~~~~~~~~~~~~~~~~~~~ \gamma\beta^*\not=0.5 \\ 
		\frac{(n-2)^{2\gamma\beta^*}}{n^2} e^{2\gamma\beta^*}(1 + \ln(n)),\\
		~~~~~~~~~~~~~~~~~~~~~~~~~~~~~~~~~~~~~~~ \gamma\beta^*=0.5 
		\end{array}
		\right..
		\end{align*}\normalsize
			\end{theorem}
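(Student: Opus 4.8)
The plan is to bound each weight $w_k$ by a simple closed form, turn the resulting product into a partial harmonic sum, and then convert that sum into a power of $n/(n-k)$ so that $\sum_k w_k^2$ collapses to a $p$-series that I can estimate by an integral. First I would use the hypothesis $\beta_i \le \beta^*$ to replace every factor $i + \gamma\beta_i$ in the numerator of $w_k$ by $i + \gamma\beta^*$; since the denominator carries exactly one extra factor (the term $i=n$), I can pull out a $1/n$ and write
\[
w_k(\beta_{n-k:n}) \le \frac{1}{n}\prod_{i=n-k}^{n-1}\frac{i+\gamma\beta^*}{i} = \frac{1}{n}\prod_{i=n-k}^{n-1}\Bigl(1+\frac{\gamma\beta^*}{i}\Bigr).
\]

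The central estimate would then apply $1+x\le e^x$ to get $\prod_{i=n-k}^{n-1}(1+\gamma\beta^*/i) \le \exp\bigl(\gamma\beta^*\sum_{i=n-k}^{n-1} 1/i\bigr)$, and bound the partial harmonic sum by its ``first term plus integral'' comparison, $\sum_{i=n-k}^{n-1} 1/i \le \frac{1}{n-k}+\int_{n-k}^{n-1}\frac{dx}{x}\le 1 + \ln\frac{n}{n-k}$ (the leading $1$ absorbs the endpoint term $1/(n-k)\le 1$). Exponentiating turns the logarithm into a power, giving the clean bound $w_k \le \frac{e^{\gamma\beta^*}}{n}\bigl(\frac{n}{n-k}\bigr)^{\gamma\beta^*}$, so that after squaring $w_k^2 \le \frac{e^{2\gamma\beta^*}}{n^{2-2\gamma\beta^*}}\cdot\frac{1}{(n-k)^{2\gamma\beta^*}}$.

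Summing over $k$ and reindexing with $j=n-k$ running from $1$ to $n$ reduces the whole quantity to $\frac{e^{2\gamma\beta^*}}{n^{2-2\gamma\beta^*}}\sum_{j=1}^n j^{-2\gamma\beta^*}$, a $p$-series whose integral bound $\sum_{j=1}^n j^{-p} \le 1 + \int_1^n x^{-p}\,dx$ produces exactly the two stated expressions. The case split is forced by this integral: when $2\gamma\beta^* \ne 1$ the antiderivative is the power $\frac{x^{1-2\gamma\beta^*}}{1-2\gamma\beta^*}$, giving $\frac{n^{1-2\gamma\beta^*}}{1-2\gamma\beta^*}-\frac{1}{1-2\gamma\beta^*}+1$ and hence the first branch, whereas at the critical exponent $\gamma\beta^*=0.5$ the antiderivative is $\ln x$, giving the $(1+\ln n)$ branch. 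Monotonicity of $x^{-p}$ (valid since $\gamma\beta^*\ge 0$) is what justifies every integral comparison used above.

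I expect the main obstacle to be tuning the harmonic-sum estimate precisely enough to land on the stated constants, in particular the $e^{2\gamma\beta^*}$ prefactor and, in the critical case, the sharper $(n-2)^{2\gamma\beta^*}/n^2$ factor: the crude bound $\sum 1/i \le 1 + \ln(n/(n-k))$ used above and the tighter alternative $\sum 1/i \le \ln\frac{n-1}{n-k-1}$ lead to slightly different leading factors, and the latter additionally requires the endpoint $k=n-1$ to be handled separately to avoid a vanishing index $n-k-1$. Everything after the per-weight bound is routine bookkeeping, so the care in step two and the case analysis at the critical exponent are the only genuinely delicate points.
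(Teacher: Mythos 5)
Your proposal is correct and follows essentially the same route as the paper's proof: bound each $\beta_i$ by $\beta^*$, pull out the extra $1/n$, convert the product into a harmonic sum (your $1+x\le e^x$ is the same inequality as the paper's concavity-of-$\ln$ step), compare the harmonic sum with an integral, and reduce everything to a $p$-series with the case split at $2\gamma\beta^* = 1$. The only divergence is the prefactor in the critical case, where you get $e^{2\gamma\beta^*}/n^{2-2\gamma\beta^*}$ rather than the stated $(n-2)^{2\gamma\beta^*}e^{2\gamma\beta^*}/n^2$; this is an asymptotically vanishing discrepancy that you explicitly flag, and the paper's own proof of that branch exhibits the same looseness.
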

		Based on the results of Theorem 2, we can get the following corollary directly.
		\begin{corollary}
			The  order of $ \sum_{k=0}^{n-1}\left(w_k(\beta_{n-k:n})\right)^2 $  is:\\
				  $ \mathcal{O}(\frac{1}{n}) $, if  $ \gamma\beta^* < 0.5 $,.\\
			  $ \mathcal{O}(\frac{1}{n^{2-2\gamma\beta^*}}) $, if  $ 0.5<\gamma\beta^* <1 $.\\ 
		  $ \mathcal{O}(\frac{1}{n^{2-2\gamma\beta^*}}\ln(n)) $,  	if $ \gamma\beta^* = 0.5 $.\\ 
			The sufficient condition for the  $ \lim_{n\to \infty}\sum_{k=0}^{n-1}\left(w_k(\beta^*)\right)^2 = 0 $ is  $ \gamma\beta^* <1 $    \\
		\end{corollary}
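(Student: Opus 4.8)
The plan is to read the asymptotic order straight off the explicit upper bounds of Theorem \ref{thmsumw2}, treating $\gamma\beta^*$ as the single parameter that controls everything; since the corollary only asserts big-$\mathcal{O}$ upper bounds on a nonnegative quantity, each claim follows as soon as the leading power of $n$ in the corresponding closed-form bound is identified, with no matching lower bound needed. For the non-critical case $\gamma\beta^*\neq 0.5$ I would first rewrite the bound in the expanded form
$$\frac{e^{2\gamma\beta^*}}{n^{2-2\gamma\beta^*}}\left(\frac{n^{1-2\gamma\beta^*}-1}{1-2\gamma\beta^*}+1\right) = \frac{e^{2\gamma\beta^*}}{1-2\gamma\beta^*}\cdot\frac{1}{n} - \frac{2\gamma\beta^*\, e^{2\gamma\beta^*}}{1-2\gamma\beta^*}\cdot\frac{1}{n^{2-2\gamma\beta^*}},$$
so that the whole bound becomes a linear combination of exactly two powers of $n$, namely $n^{-1}$ and $n^{-(2-2\gamma\beta^*)}$, with coefficients depending only on $\gamma\beta^*$. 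The order is then decided purely by comparing the exponents $1$ and $2-2\gamma\beta^*$.

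Next I would split into the three regimes. For $\gamma\beta^*<0.5$ we have $2-2\gamma\beta^*>1$, hence $n^{-(2-2\gamma\beta^*)}=o(n^{-1})$ and the first term dominates, giving $\mathcal{O}(1/n)$; here $1-2\gamma\beta^*>0$ keeps the leading coefficient finite and positive. For $0.5<\gamma\beta^*<1$ we have $2-2\gamma\beta^*<1$, so now the second term dominates; although $1-2\gamma\beta^*<0$ makes the displayed coefficients appear negative, the bracketed factor in fact tends to the positive constant $2\gamma\beta^*/(2\gamma\beta^*-1)$ as $n\to\infty$, so the bound is a genuine positive quantity of order $\mathcal{O}(1/n^{2-2\gamma\beta^*})$. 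The critical case $\gamma\beta^*=0.5$ uses the second branch of Theorem \ref{thmsumw2}: since $(n-2)/n^2=\Theta(1/n)$ and $2-2\gamma\beta^*=1$, multiplying by $e^{2\gamma\beta^*}(1+\ln n)$ yields $\mathcal{O}\!\left(\frac{\ln n}{n^{2-2\gamma\beta^*}}\right)$.

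For the limit statement I would observe that the decay exponent $2-2\gamma\beta^*$ is positive exactly when $\gamma\beta^*<1$, and that in each of the three regimes the corresponding bound---$\mathcal{O}(1/n)$, $\mathcal{O}(\ln n/n)$, or $\mathcal{O}(1/n^{2-2\gamma\beta^*})$---tends to $0$; since $\sum_{k=0}^{n-1}\left(w_k(\beta^*)\right)^2\ge 0$ is dominated by these vanishing upper bounds, a squeeze gives $\lim_{n\to\infty}\sum_{k=0}^{n-1}\left(w_k(\beta^*)\right)^2=0$ under $\gamma\beta^*<1$. There is no substantial obstacle here, as the corollary is essentially a bookkeeping exercise on the closed-form bounds; the one point that genuinely requires care is the sign analysis in the regime $\gamma\beta^*>0.5$, where $1-2\gamma\beta^*<0$ forces one to check that the dominant term remains a valid positive upper bound rather than reading off a spurious negative order.
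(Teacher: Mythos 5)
Your proposal is correct and follows essentially the same route as the paper: the paper also treats this corollary as a direct consequence of Theorem \ref{thmsumw2}, identifying which of the two terms inside the bracket (the power $n^{1-2\gamma\beta^*}$ or the constant) dominates after multiplication by the prefactor $1/n^{2-2\gamma\beta^*}$, with the $\gamma\beta^*=0.5$ branch read off separately and convergence to zero noted for all $\gamma\beta^*<1$. Your explicit sign check in the regime $\gamma\beta^*>0.5$, where $1-2\gamma\beta^*<0$, is a small refinement the paper skips over, but it does not change the argument.
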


Before showing the proof of Theorem 2, we first introduce a Lemma which will be used.
	\begin{lemma}\label{sumint}
	If $ a < b $, $ \sum_{i=a}^{b}\frac{1}{i}\le \frac{1}{a} + \ln(b) - \ln(a) $.	
\end{lemma}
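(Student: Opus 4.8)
The plan is to prove the bound by the standard integral-comparison technique, exploiting that the function $1/x$ is monotonically decreasing on the positive reals (here $a$ and $b$ are summation indices, so positive integers with $a<b$). First I would isolate the leading term of the sum, writing $\sum_{i=a}^{b}\frac{1}{i} = \frac{1}{a} + \sum_{i=a+1}^{b}\frac{1}{i}$, so that the term $\frac{1}{a}$ is kept exactly and only the tail from $a+1$ to $b$ is bounded by an integral.

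For the tail, the key step is to observe that since $1/x$ is decreasing, for every integer $i$ with $a+1\le i\le b$ and every $x\in[i-1,i]$ we have $\frac{1}{i}\le\frac{1}{x}$, and hence $\frac{1}{i}\le\int_{i-1}^{i}\frac{1}{x}\,dx$. Summing these inequalities over $i$ from $a+1$ to $b$, the integration windows $[i-1,i]$ tile the interval $[a,b]$ and the integrals telescope into a single integral, giving $\sum_{i=a+1}^{b}\frac{1}{i}\le\int_{a}^{b}\frac{1}{x}\,dx=\ln(b)-\ln(a)$.

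Combining the two pieces yields $\sum_{i=a}^{b}\frac{1}{i}\le\frac{1}{a}+\ln(b)-\ln(a)$, which is exactly the claim. The only point requiring a little care — and the reason for peeling off the first term rather than bounding the whole sum at once — is the alignment of the integration window: bounding $\frac{1}{i}$ by $\int_{i-1}^{i}\frac{1}{x}\,dx$ is what makes the windows telescope cleanly from $a$ to $b$, whereas trying to include $i=a$ in the same scheme would force the lower limit down to $a-1$ and produce a weaker (or, when $a=1$, undefined) bound. I expect no substantive obstacle beyond this bookkeeping, since the estimate is merely the elementary monotone Riemann-sum comparison for $\int 1/x$.
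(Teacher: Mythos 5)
Your proposal is correct and follows essentially the same argument as the paper: peel off the leading term $\frac{1}{a}$, bound each remaining term $\frac{1}{i}$ (for $a+1 \le i \le b$) by $\int_{i-1}^{i}\frac{1}{x}\,dx$ using monotonicity of $1/x$, and let the integrals telescope into $\int_{a}^{b}\frac{1}{x}\,dx = \ln(b)-\ln(a)$. Your additional remark about why the first term must be separated (to keep the integration windows inside $[a,b]$) is a nice clarification of a point the paper leaves implicit.
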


\small\begin{proof}
	\begin{align*}
	\sum_{i=a}^{b}\frac{1}{i}&\le \frac{1}{a} + \sum_{i=a+1}^{b}\frac{1}{i} \le \frac{1}{a} + \sum_{i=a+1}^{b}\int_{k=i-1}^{i}\frac{1}{k}dk  \\
	&\le  \frac{1}{a} +  \int_{k=a}^{b}\frac{1}{k}dk   \le \frac{1}{a} + \ln(b) - \ln(a)
	\end{align*}
\end{proof}\normalsize

	\begin{proof}[Proof of Theorem \ref{thmsumw2}]
		\small \begin{align*}
		&\sum_{k=0}^{n-1}\left(w_k(\beta_{n-k:n})\right)^2 \le \sum_{k=0}^{n-1}\left( \frac{\prod_{i=n-k}^{n-1} (i+\gamma\beta^*)}{\prod_{i=n-k}^{n } i}\right)^2 \\
		&\underbrace{=}_{(a)}\sum_{k=0}^{n-1}\exp \left\lbrace 2\left[ \sum_{i=n-k}^{n-1}\ln(i+\gamma\beta^*) -\sum_{i=n-k}^{n}\ln i \right] \right\rbrace \\
		&\underbrace{=}_{({b})}\frac{1}{n^2}\sum_{k=0}^{n-1}\exp \left\lbrace 2 \sum_{i=n-k}^{n-1}\left[ \ln(i+\gamma\beta^*) - \ln i \right] \right\rbrace \\
		&\underbrace{\le}_{(c)} \frac{1}{n^2}\sum_{k=0}^{n-1}\exp \left\lbrace 2 \sum_{i=n-k}^{n-1}\frac{ \gamma\beta^*}{i} \right\rbrace \\
		&\underbrace{\le}_{(d)} \frac{1}{n^2}\sum_{k=0}^{n-1}\exp \left\lbrace 2\gamma\beta^* \left[\ln(n-2) - \ln(n-k) + 1 \right] \right\rbrace \\
		&= \frac{(n-2)^{2\gamma\beta^*}}{n^2} e^{2\gamma\beta^*}\sum_{k=0}^{n-1}\frac{1}{(n-k)^{2\gamma\beta^*}} \\
		& = \frac{(n-2)^{2\gamma\beta^*}}{n^2} e^{2\gamma\beta^*}\sum_{t=1}^{n }\frac{1}{ t ^{2\gamma\beta^*}} \numberthis \label{sumw2}
		\end{align*}\normalsize
		We rewrite the product  term in (a) into the   summarization  term. Then we drop one term outside of the summarization to align the $ i $ sum from $ n-k $ to $ n-1 $ in (b). (c) follows the concave property of the $ \ln $ function. (d) follows the relation between summarization and integral as shown in Lemma \ref{sumint}. The last two terms is right because we only rearrange the term and write it simply.
		\indent
		
		If $ \gamma\beta^* = 0.5 $, $ 2\gamma\beta^* =1 $, 
		\begin{align*}
		\sum_{k=0}^{n-1}\left(w_k(\beta_{n-k:n})\right)^2 \le \frac{1}{n^{2-2\gamma\beta^*} }e^{\frac{2\gamma\beta^*}{n-1}}(1 + \ln(n))
		\end{align*}
		If $ \gamma\beta^* \not = 0.5 $,
		\small\begin{align*}
		\small&\sum_{k=0}^{n-1}\left(w_k(\beta^*)\right)^2 \le \underbrace{\frac{1}{n^{2-2\gamma\beta^*}}}_{(e)} \underbrace{e^{2\gamma\beta^*}}_{(f)}\left(\underbrace{\frac{n^{1-2\gamma\beta^*}}{1-2\gamma\beta^*}}_{(g)} - \underbrace{\frac{1}{1-2\gamma\beta^*}+1}_{(h)}\right) \label{key} \normalsize
		\end{align*}\normalsize
		\noindent
		Note that term (f) is a constant.\\
		If $ \gamma\beta^* < 0.5 $, term(g) will dominant the order, $ \sum_{k=0}^{n-1}\left(w_k(\beta_{n-k:n})\right)^2 $ will be $ \mathcal{O}(\frac{1}{n}) $.\\
		If $ \gamma\beta^* > 0.5 $, term(h) will dominant the order, $ \sum_{k=0}^{n-1}\left(w_k(\beta_{n-k:n})\right)^2 $ will be $ \mathcal{O}(\frac{1}{n^{2-2\gamma\beta^*}}) $.\\ 
		If $ \gamma\beta^* = 0.5 $, $ \sum_{k=0}^{n-1}\left(w_k(\beta_{n-k:n})\right)^2 $ will be $ \mathcal{O}(\frac{1}{n^{2-2\gamma\beta^*}}\ln(n)) $.\\ 
		In all case, the (\ref{key}) will converge to 0 as $ n $ will go to $ \infty $.
	\end{proof}

			Note that if $ \gamma\beta^* <1 $. The theorem \ref{thmsumw2} shows, $ \sum_{k=0}^{n-1}w_k^2 $ converges to 0 and the convergence rate is highly related to the $ \gamma\beta^* $. 		
	The next theorem shows the upper bound of the coefficient $ \alpha_n $ in initialization error.
	\begin{theorem}\label{alpha}
		Denote $ \alpha_n = \frac{\prod_{i=1}^{n-1} (i+\gamma\beta_i )}{\prod_{i=2}^{n } i} $, and $ \beta_i \le \beta^* \text{ for } \forall i \le n $,
		we can bound $ \alpha_n $ as:
		\begin{align}
		\alpha_n \le \frac{(n-1)^{\gamma\beta^*}}{n}(1+\gamma\beta^*)e^{(0.5-\ln2)\gamma\beta^*} = \frac{C^1_{\gamma,\beta^*}}{n^{1-\gamma\beta^*}}.
		\end{align}
		where $ C^1_{\gamma,\beta^*} =(1+\gamma\beta^*)e^{(0.5-\ln2)\gamma\beta^*} $   is a constant。
	\end{theorem}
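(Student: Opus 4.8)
The plan is to follow the same template that drives the proof of Theorem \ref{thmsumw2}: replace each $\beta_i$ by its upper bound $\beta^*$, peel off the boundary factors so that the remaining product runs over a clean index range, and then turn the product into a sum by taking logarithms. Concretely, since $\beta_i \le \beta^*$ we first get $\alpha_n \le \frac{\prod_{i=1}^{n-1}(i+\gamma\beta^*)}{\prod_{i=2}^{n} i}$. I would then isolate the $i=1$ factor $(1+\gamma\beta^*)$ from the numerator and the $i=n$ factor $n$ from the denominator, leaving the matched ratio $\prod_{i=2}^{n-1}\frac{i+\gamma\beta^*}{i} = \prod_{i=2}^{n-1}\bigl(1+\frac{\gamma\beta^*}{i}\bigr)$, whose factors now line up term by term.

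The core estimate comes from taking the logarithm of this matched product and bounding each summand with the concavity inequality $\ln(1+x)\le x$, exactly as in step (c) of the proof of Theorem \ref{thmsumw2}. This gives $\sum_{i=2}^{n-1}\ln\bigl(1+\frac{\gamma\beta^*}{i}\bigr) \le \gamma\beta^* \sum_{i=2}^{n-1}\frac{1}{i}$. Applying Lemma \ref{sumint} with $a=2$ and $b=n-1$ bounds the harmonic sum by $\frac12 + \ln(n-1) - \ln 2$, so after exponentiating the matched product is at most $e^{(0.5-\ln 2)\gamma\beta^*}(n-1)^{\gamma\beta^*}$. Reinstating the two boundary factors yields $\alpha_n \le \frac{(n-1)^{\gamma\beta^*}}{n}(1+\gamma\beta^*)e^{(0.5-\ln 2)\gamma\beta^*}$, which is the first claimed expression; the final form $\frac{C^1_{\gamma,\beta^*}}{n^{1-\gamma\beta^*}}$ then follows from the crude bound $(n-1)^{\gamma\beta^*}\le n^{\gamma\beta^*}$ together with collecting the constant into $C^1_{\gamma,\beta^*}$.

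I expect the only delicate point to be the bookkeeping of the index ranges so that the constant $(0.5-\ln 2)$ emerges correctly: the $\frac12$ is precisely the $\frac1a$ term and the $-\ln 2$ is the $-\ln a$ term of Lemma \ref{sumint} evaluated at $a=2$, and these survive only because the product was split to start the matched range at $i=2$ rather than $i=1$. The separately extracted factor $(1+\gamma\beta^*)$ — the $i=1$ term, which cannot be folded into the $\ln(1+x)\le x$ estimate without spoiling the constant — is exactly what produces the leading factor of $C^1_{\gamma,\beta^*}$. No probabilistic or contraction arguments are needed here, since the statement is a purely deterministic estimate on the weight coefficient $\alpha_n$; the entire difficulty is this careful accounting of boundary terms and index shifts.
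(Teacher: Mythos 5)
Your proposal is correct and follows essentially the same route as the paper's proof: bound $\beta_i$ by $\beta^*$, peel off the $i=1$ and $i=n$ boundary factors, bound the matched product via $\ln(1+x)\le x$ and Lemma \ref{sumint} with $a=2$, $b=n-1$, then exponentiate. Your handling of the final step is in fact slightly more careful than the paper's, which writes $\frac{(n-1)^{\gamma\beta^*}}{n}(1+\gamma\beta^*)e^{(0.5-\ln2)\gamma\beta^*} = \frac{C^1_{\gamma,\beta^*}}{n^{1-\gamma\beta^*}}$ with an equality where, as you note, the bound $(n-1)^{\gamma\beta^*}\le n^{\gamma\beta^*}$ makes it an inequality.
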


	\begin{proof}[Proof of Theorem \ref{alpha}]
		 \small\begin{align}
		&\alpha_n \le \frac{\prod_{i=1}^{n-1} (i+\gamma\beta^* )}{\prod_{i=2}^{n } i} \\
		&= \exp\left\{ \sum_{i=1}^{n-1}\ln(i+\gamma\beta^*) - \sum_{i=2}^{n}\ln i \right\}\\
		&=(1+\gamma\beta^*)\exp\left\{ \sum_{i=2}^{n-1} \left( \ln(i+\gamma\beta^*) - \ln i \right) - \ln n\right\}\\
		& \le (1+\gamma\beta^*)\exp\left\{ \sum_{i=2}^{n-1} \left( \frac{\gamma\beta^*}{i} \right) - \ln n\right\}\\ 
		&\le (1+\gamma\beta^*)\exp\left\{ \gamma\beta^*(0.5 + \ln(n-1) - \ln2 )- \ln n\right\} \\
		& \le \frac{(n-1)^{\gamma\beta^*}}{n}(1+\gamma\beta^*)e^{(0.5-\ln2)\gamma\beta^*}
		\end{align}\normalsize
		We rewrite the product  term in the second equation into the   summarization term. The third equation is rearrange the terms. The first inequality follows the concave property of $ \ln $ function. The second inequality follows the relation between summarization and integral(Lemma \ref{sumint}).
	\end{proof}
		Note that if $ \gamma\beta^* <1 $. The theorem \ref{alpha} shows, $ \alpha_n $ converge to 0 and the convergence rate is in order $ \mathcal{O}(\frac{1}{n^{1-\gamma\beta^*}}) $.

	Combining Theorem \ref{q'}, \ref{thmsumw2} and \ref{alpha}, we have the following Theorem:\\
	\begin{theorem}\label{thmkey}
		The TTQL will converge if we set the \textbf{safe condition} as $$      \hat{\beta}_n = \frac{\Delta(M_1, M_2)}{E_n}  \le 1.  $$ And the convergence rate is: 
		\begin{align}
		E_n \le
					\left\{
			\begin{array}{lr}
			 \mathcal{O}(\frac{1}{n^{1-\gamma\beta}}E_1 + \sqrt{\frac{1}{n}} ) ,  &if ~  \gamma\beta < 0.5 \\
			\mathcal{O}(\frac{1}{n^{1-\gamma\beta}}E_1 + \frac{1}{n^{1-\gamma\beta}}\sqrt{\ln n}) , & if~   \gamma\beta = 0.5 \\
			\mathcal{O}(\frac{1}{n^{1-\gamma\beta}}E_1 + \frac{1}{n^{1-\gamma\beta}}) ,  &if ~ 0.5 <  \gamma\beta < 1\\
			\end{array}
			\right..
		\end{align}

	\end{theorem}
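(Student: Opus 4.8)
The plan is to treat this as a synthesis result: Theorem~\ref{q'} already supplies the two-term decomposition $E_n \le \alpha_n E_1 + \sqrt{\tfrac{1}{2}\ln(1/\delta)\sum_{k=0}^{n-1}w_k^2(\beta_{n-k:n})}$, while Theorems~\ref{alpha} and~\ref{thmsumw2} control the two summands. So the bulk of the work is (i) verifying that the stated \emph{safe condition} is exactly the hypothesis $0\le\beta_n\le1$ that licenses Theorem~\ref{q'}, and (ii) substituting the order estimates and taking a square root in the sampling term, the constant $\sqrt{\ln(1/\delta)/2}$ being absorbed into the $\mathcal{O}$.

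First I would make the safe-condition link precise. In Algorithm~1 the transferred target is $Q_{target}=Q^*_{source}=Q_1^*$, so by the definitions of $\mne$ and of $\Delta$ we get $\mne(Q_{target})=\max_{s,a}|Q_1^*(s,a)-Q_2^*(s,a)|=\Delta(M_1,M_2)$, where $Q_2^*$ is the optimal Q-function of the new task (the $Q^*$ entering $E_n$). Hence the error ratio $\beta_n=\mne(Q_{target})/E_n$ used in Theorem~\ref{q'} coincides with $\hat\beta_n=\Delta(M_1,M_2)/E_n$. The safe condition $\hat\beta_n\le1$ therefore gives $0\le\beta_n\le1$, precisely the assumption of Theorem~\ref{q'}; together with $0<\gamma<1$ it yields $\gamma\beta_n<1$, matching the sufficient condition $\gamma\beta^*<1$ for convergence from the Corollary to Theorem~\ref{thmsumw2}. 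When the condition fails the algorithm falls back to $Q_{target}=Q_t$, i.e.\ ordinary Q-learning with $\beta_n=1$, for which $\gamma\beta_n=\gamma<1$ still holds, so convergence is never lost.

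Next I would plug in the rates, writing $\beta=\beta^*=\max_{i\le n}\beta_i\le1$. Theorem~\ref{alpha} bounds the initialization coefficient by $\alpha_n\le C^1_{\gamma,\beta}/n^{1-\gamma\beta}$, so the initialization error is $\mathcal{O}(n^{-(1-\gamma\beta)}E_1)$ in every regime. For the sampling error I substitute Theorem~\ref{thmsumw2} and take the square root case by case: if $\gamma\beta<0.5$ then $\sum_k w_k^2=\mathcal{O}(1/n)$ gives $\mathcal{O}(\sqrt{1/n})$; if $\gamma\beta=0.5$ then $\sum_k w_k^2=\mathcal{O}(n^{-(2-2\gamma\beta)}\ln n)$ gives $\mathcal{O}(n^{-(1-\gamma\beta)}\sqrt{\ln n})$; and if $0.5<\gamma\beta<1$ then $\sum_k w_k^2=\mathcal{O}(n^{-(2-2\gamma\beta)})$ gives $\mathcal{O}(n^{-(1-\gamma\beta)})$. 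Adding the two terms reproduces the three-case bound in the statement.

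The step I expect to be the main obstacle is reconciling the per-step error ratio with the single $\beta$ appearing in the final rate. Theorems~\ref{thmsumw2} and~\ref{alpha} are stated with a uniform bound $\beta_i\le\beta^*$, whereas in TTQL the effective $\beta_i$ drifts from step to step (it grows as $E_n$ shrinks) and may even toggle between the transferred value and $1$ as the safe condition switches on and off. I would handle this through monotonicity of $w_k$ and $\alpha_n$ in the $\beta_i$: replacing each $\beta_i$ by $\beta^*=\max_{i\le n}\beta_i$ only enlarges the bounds, so the worst-case $\beta^*\le1$ is what legitimately enters the rate. A secondary delicacy is that $\hat\beta_n$ depends on the unknown error $E_n$; for the convergence statement $\hat\beta_n$ is treated as given, and the Bellman-error surrogate discussed earlier is what makes the condition checkable in practice.
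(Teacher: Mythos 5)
Your proposal is correct and takes essentially the same route as the paper: the paper's proof of this theorem is exactly the synthesis you describe, namely combining Theorem~\ref{q'} with the order estimates of Theorems~\ref{thmsumw2} and~\ref{alpha}, while the safe-condition bookkeeping you make precise (when transferring, $\beta_n=\hat\beta_n\le 1$ since $\mne(Q^*_{source})=\Delta(M_1,M_2)$; otherwise $\beta_n=1$ and $\gamma\beta_n=\gamma<1$) is what the paper states in its discussion point (4). Your case-by-case square-root computation and the monotonicity-in-$\beta_i$ argument for reducing to the uniform bound $\beta^*$ match the paper's intended (largely implicit) derivation.
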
 

Note that if the safe condition is satisfied, we set $ Q_{target} = Q_{source}^* $ and $ \beta $

	We would like to make the following discussion:  
	
	\textbf{(1) The distance between two MDPs influence the convergence rate.} According to the Proposition \ref{propq*diff}, if two MPDs have the similar components($ P $, $ r $, $\gamma$), the optimal Q-function of these two MDPs will be closed. The discounted error ratio $ \gamma\beta_n $ will be  relatively small in this situation and the convergence rate will be improved.
	
		\textbf{(2) Q-learning is the special case.}	Please note that the traditional Q-learning is a special case for target transfer Q-learning with $ Q_{target} = Q_{n-1} $. Thus the error ratio is a constant and   $\beta_n = 1$ and our results reduce to the previous \cite{szepesvari1998asymptotic}. It shows that  if the $ \beta < 1 $ in TTQL,the TTQL converge faster than traditional Q-learning.

		 \textbf{(3) The TTQL method do converge with the safe condition.}   As shown in  Theorem \ref{thmkey}, the TTQL method will converge.    And the convergence rate changes under  different discounted error ratio $  \gamma\beta $.  The smaller $ \gamma\beta $ will lead to a quicker convergence rate.	Intuitively, smaller $ \beta $ means  that $ Q' $ provides more information about the optimal Q-function. Besides, the discount factor $ \gamma $ can be viewed as the "horizon" of the infinite MDPs. Smaller $ \gamma $ means that the expected long-term return is less influenced by the future information and the immediate reward is assigned more weights.

	\textbf{(4) Safe condition is necessary.} As mentioned above,  the safe condition is defined as   $ \hat{\beta}_n \le 1  $. 			 
	If the safe condition   is satisfied, we set $ Q_{target} = Q_{source}^* $ and $ \gamma\beta_n = \gamma\hat{\beta}_n \le \gamma   < 1 $. 
	If    safe condition   is not satisfied, we set $ Q_{target} = Q_{n} $ and  $ \gamma\beta_n = \gamma < 1 $. So with the safe condition, TTQL algorithms do converge at any situation.
	 At the beginning of the new task training, due to the large error of the current Q-function, $ \beta_n = \hat{\beta}_n $ will be relatively small and the transfer learning will be  greatly helpful. Speedup would come down as the error of current Q-function, become smaller. Finally when $ \beta $ is equal to or larger than one we need to remove the transfer Q target which means to set $ \beta=1 $ to avoid the harm brought by the transfer learning.

	\section{Discussion for Error Ratio Safe Condition}

	Until now, we can conclude that TTQL will   converge.  TTQL method need the safe condition to guarantee the convergence. In this section, We discuss   the safe conditions.
	
	At the beginning, we propose the safe condition is that can guarantee the algorithms convergence generally. Heuristically, the safe condition is related to the distance between two MDPs and the quality of the current value function. Then according to the Theorem \ref{q'}, we know  that the safe condition is $  \hat{\beta}_n \le 1 $ which we called  error ratio safe condition. 
Under the transfer learning in RL setting, it means that  the distance between two MDPs need to be smaller than  the  error of the current Q-function.
 In the real algorithms, it is impossible to calculate the error of the current Q-function $ \mne(Q_n) $ and the distance between two MDPs precisely. However it is easy to calculate the bellman error $ \mnbe(Q(s,a)) = \max_{s,a} \left\vert Q(s,a) - (r(s,a) + \gamma E_{s'}\max_{\tilde{a}}(Q(s' ,\tilde{a})))\right\vert$.
We can prove that these two metrics follow the relationship  as:	
	$$\mne(Q) \le \frac{\mnbe(Q)}{1-\gamma}.	$$
 Following the standard way in Q-learning, we estimate the error ratio about the error of the Q-function w.r.t the optimal Q-function by the Bellman error.

		\begin{algorithm}[H] \label{sc2} 
		\caption{Error Ratio Safe Condition  }
		\begin{algorithmic}[1] 
			\REQUIRE leared $Q_1^*$ , current Q-function $Q_n $ 
			\IF {$ \mnbe(Q_1^*) \le \mnbe(Q_n)$}
			\STATE flag = True
			\ELSE
			\STATE flag = False
			\ENDIF
			\ENSURE flag 
		\end{algorithmic}
	\end{algorithm}

	\small\begin{proof} [Proof of the relation between $ \mne $ and $ \mnbe $] 
	
	Denote $ \mathcal{B}Q(s,a) = r(s,a) - \gamma\mathbb{E}_{s'}\max_{\tilde{a}}Q(s',\tilde{a}) $ as bellman operator. 
	\begin{align*}
	&\mne(Q) \\
	\le &\Vert Q(s,a) - \mathcal{B}Q^*(s,a)\Vert_\infty + \Vert \mathcal{B}Q^*(s,a) - Q^*(s,a)\Vert_\infty \\
	\le &\mnbe(Q) +  
 \Vert \gamma\mathbb{E}_{s'}\max_{\tilde{a}}Q(s',\tilde{a}) - \gamma\mathbb{E}_{s'}\max_{\tilde{a}}Q^*(s',\tilde{a})\Vert \\
	\le &\mnbe(Q) + \gamma \mne(Q)
	\end{align*} 
	So we can proof that
	$$ \mne(Q) \le \frac{\mnbe(Q)}{1-\gamma} .$$ 
\end{proof}\normalsize

	\section{Experiment}

		\begin{figure}[t]
			\captionsetup[subfigure]{labelformat=empty}
			\centering
			\subfloat[(a)]{
				\label{fig1}
				\includegraphics[width=1.5in,height=1.0in]{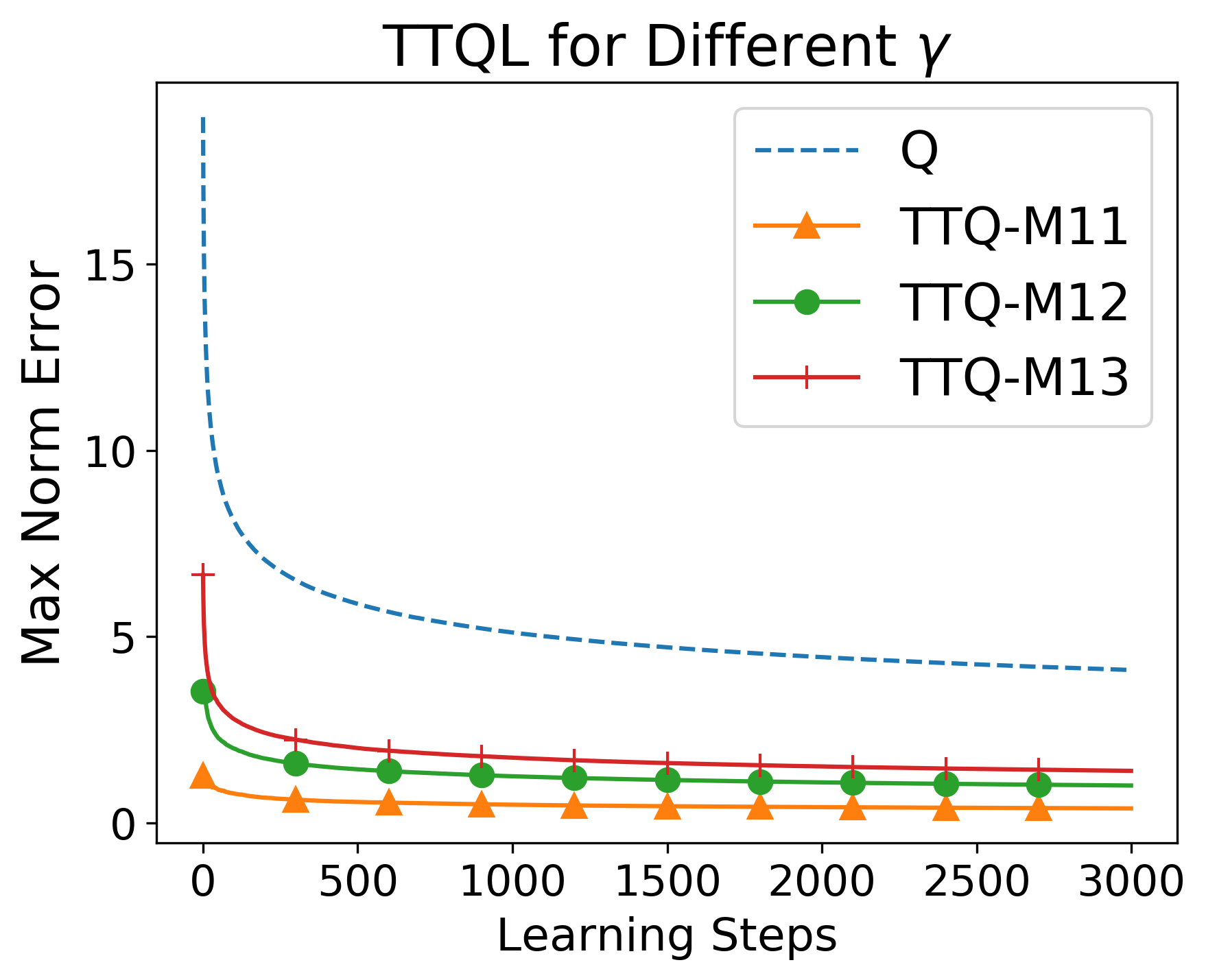}}  
			\subfloat[(d)]{
				\label{fig4}
				\includegraphics[width=1.5in,height=1.0in]{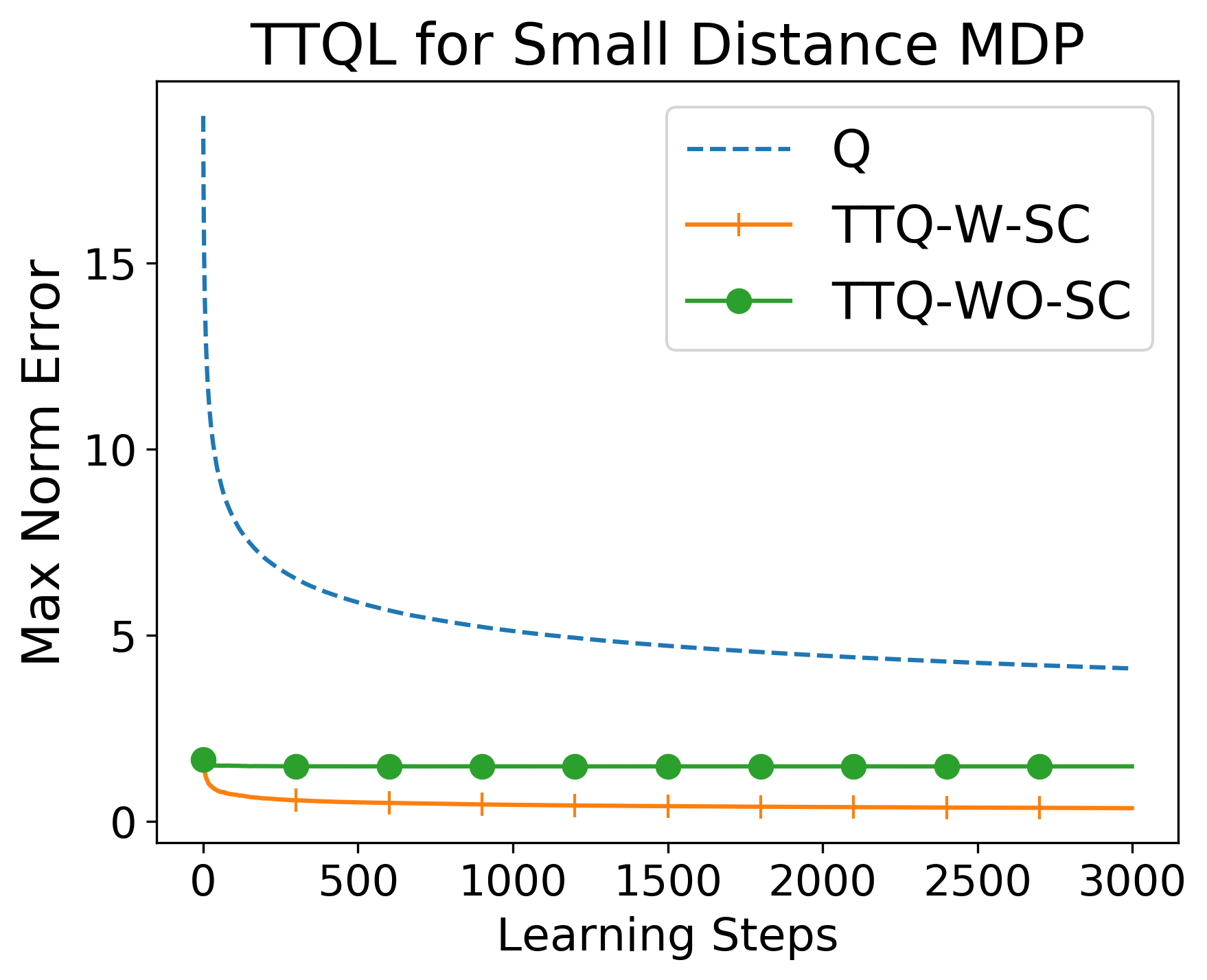}}
			
			\subfloat[(b)]{
				\label{fig2}
				\includegraphics[width=1.5in,height=1.0in]{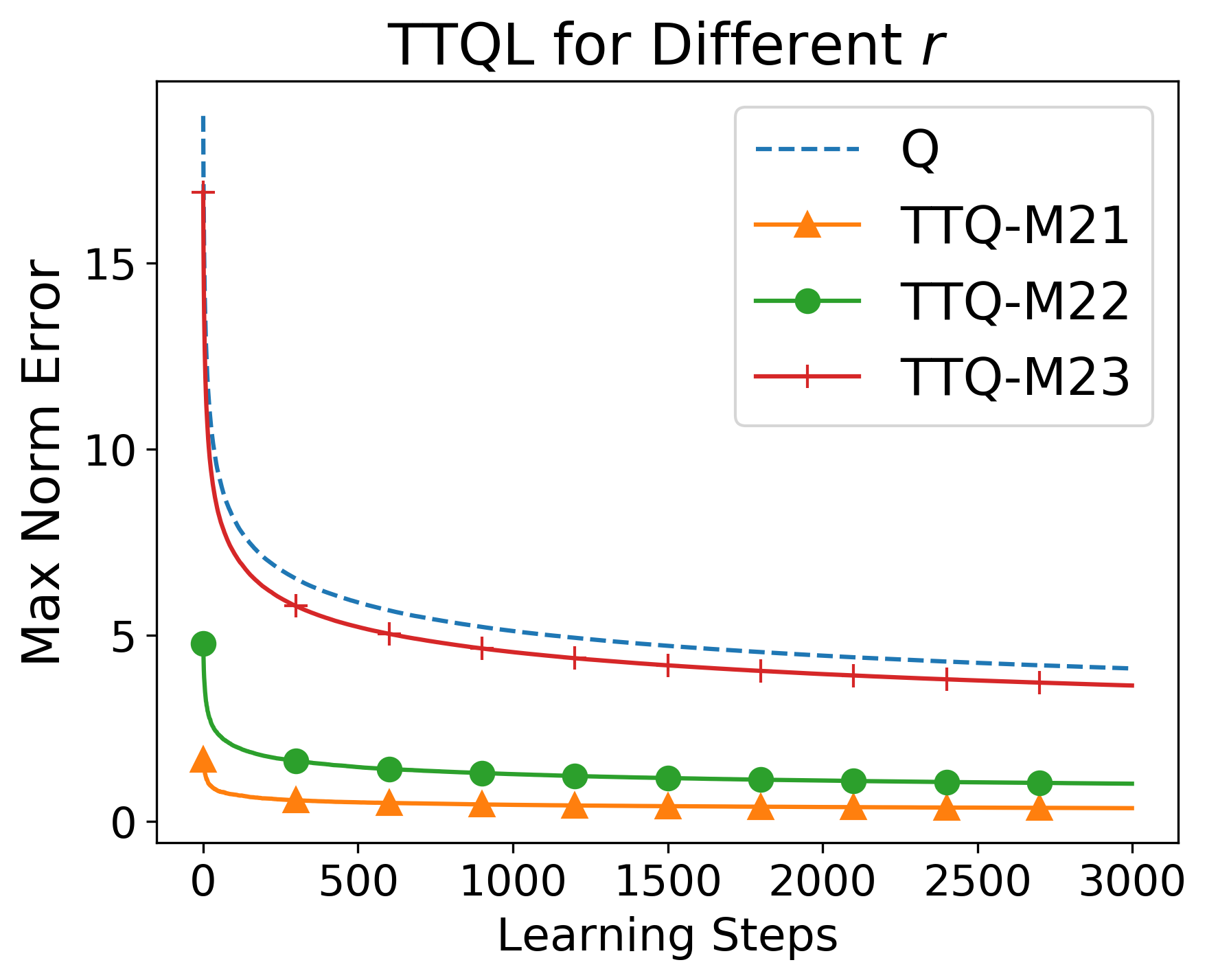}}
			\subfloat[(e)]{
				\label{fig5}
				\includegraphics[width=1.5in,height=1.0in]{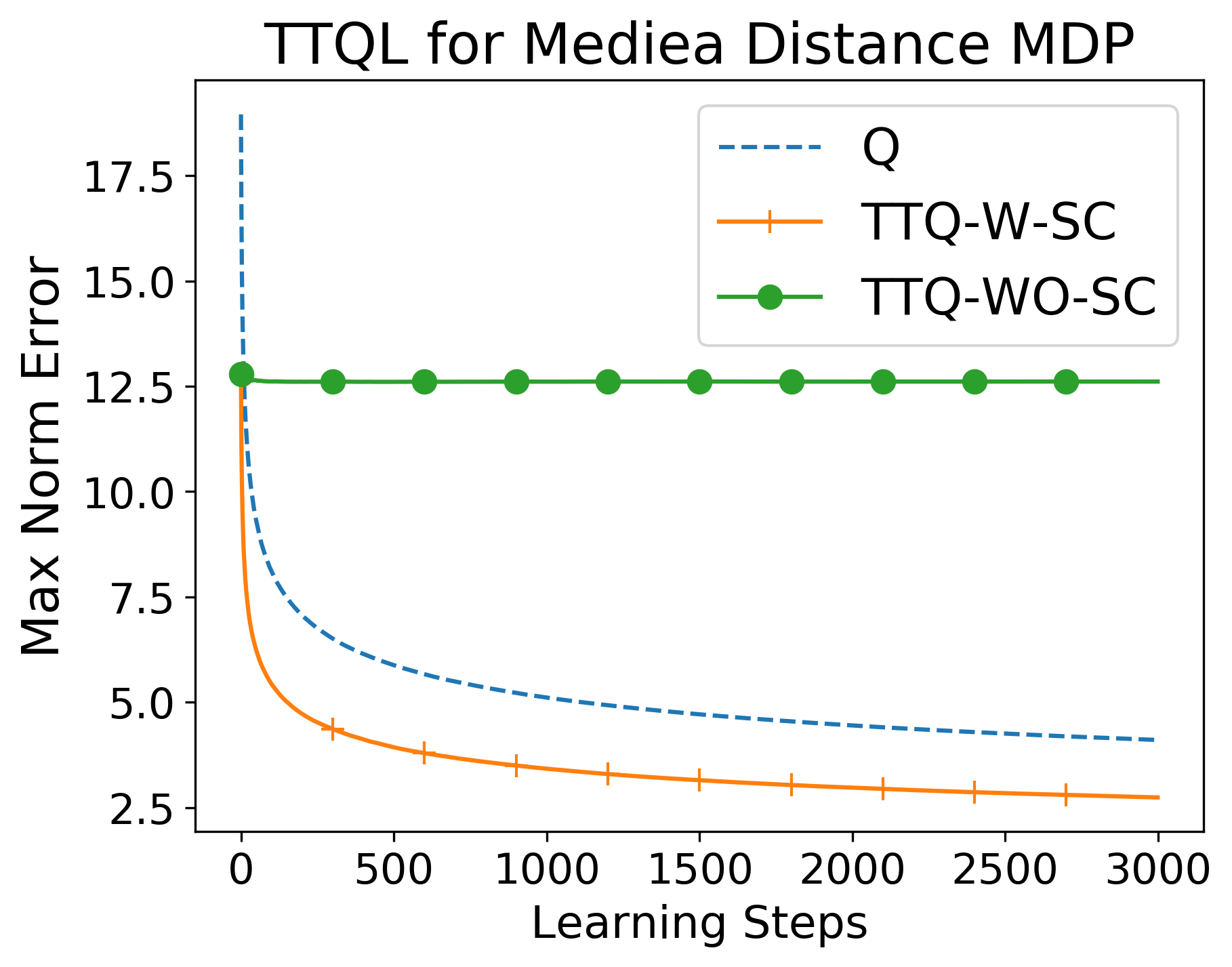}}
			
			\subfloat[(c)]{
				\label{fig3}
				\includegraphics[width=1.5in,height=1.0in]{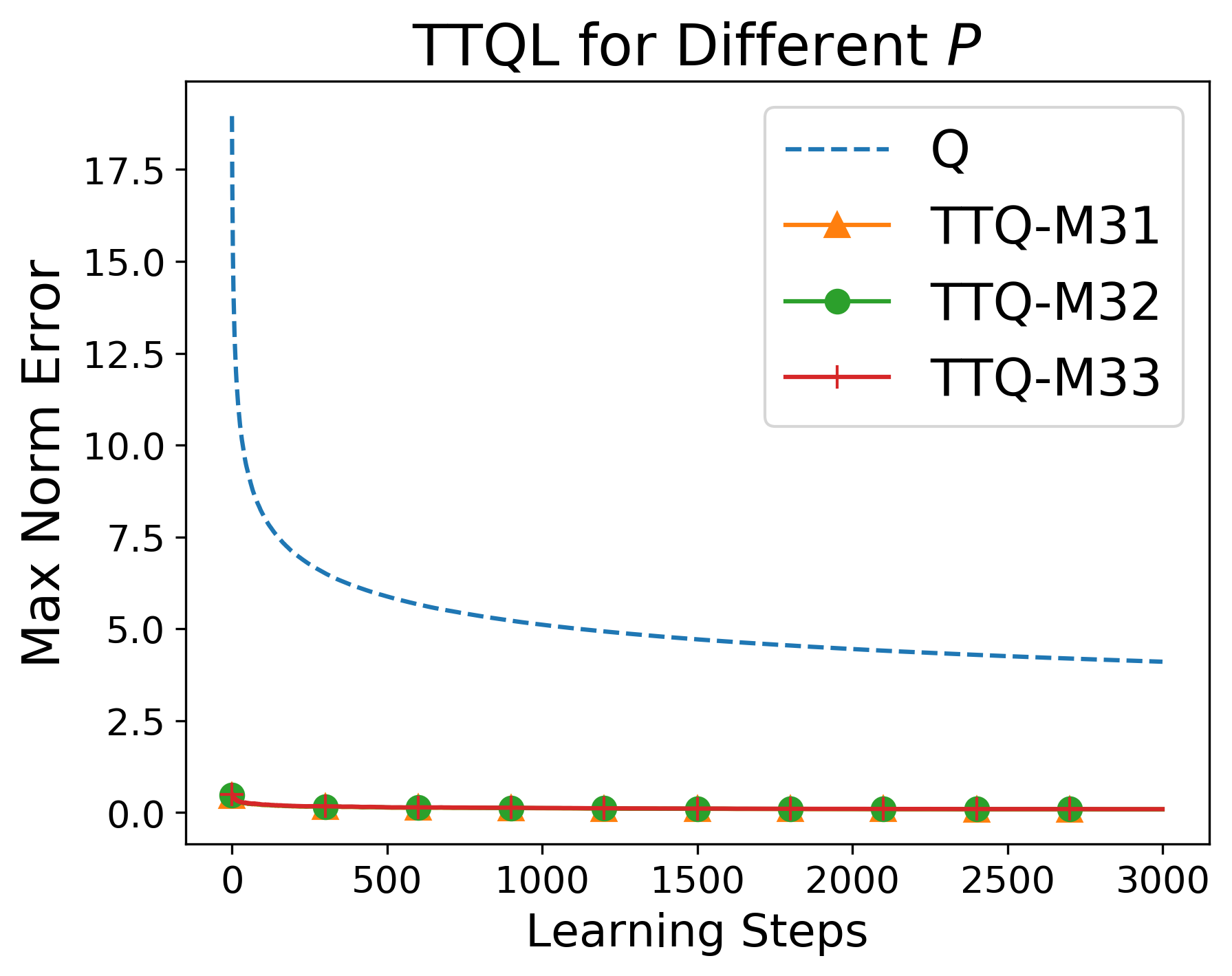}}
			\subfloat[(f)]{
				\label{fig6}
				\includegraphics[width=1.5in,height=1.0in]{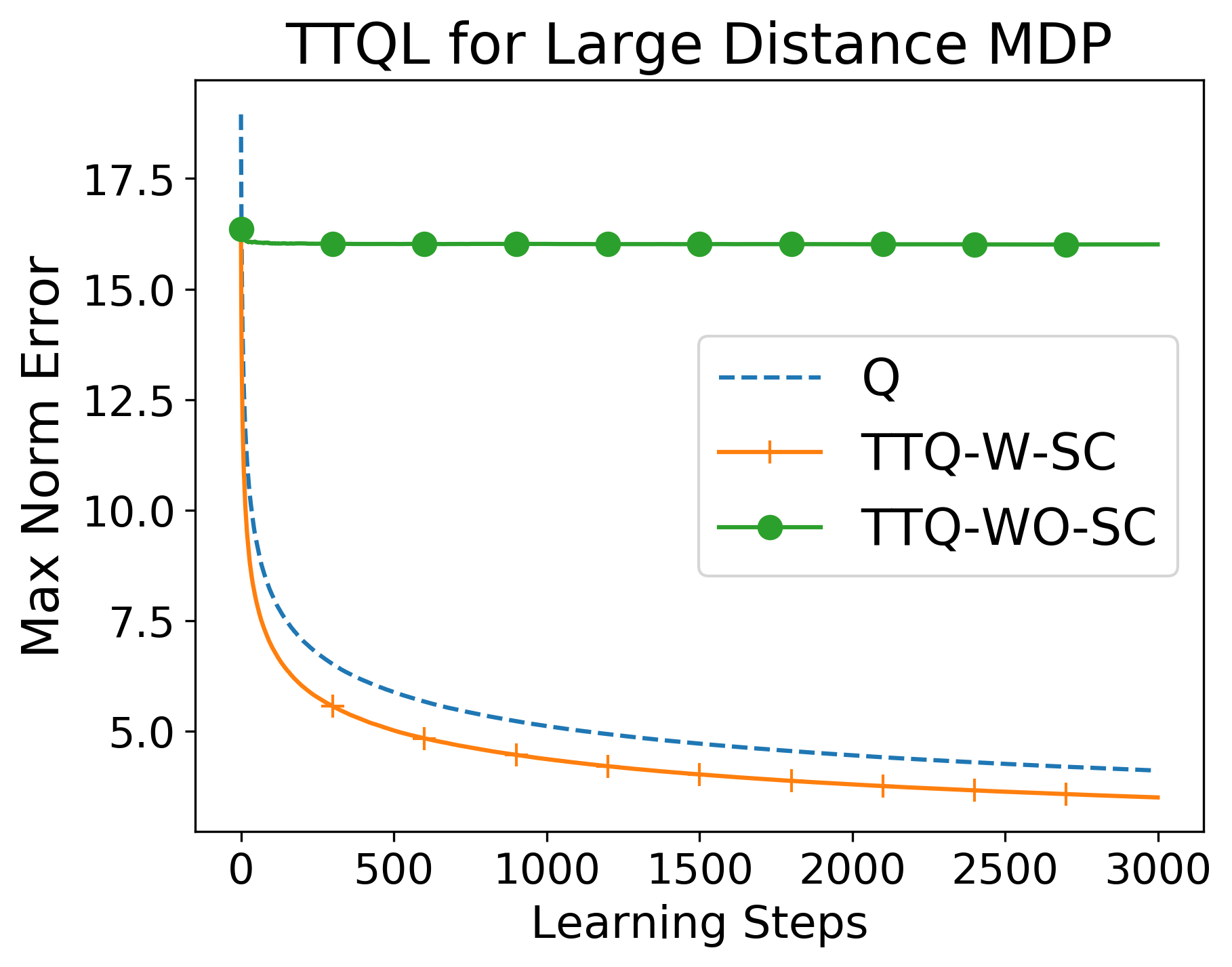}}
			\caption{  Left three figures are the learning errors w.r.t the three types of different MDPs ( Be different in $ \gamma $, $ r $, $ P $ respectively ).    Right three figures are the learning error w.r.t the three different distance transfer task  and both training with/without the safe condition. }
			\label{Fig1}
		\end{figure}

    In this section, we report our simulation experiments to support our convergence analysis and verified the effectiveness of our proposed target transfer Q-Learning with the error ratio safe condition. 

We consider the general MDP setting.  We construct the  random MDP by generating the transition probability $ P(s'|s,a) $, reward function $ r(s,a) $  and discount factor $ \gamma $ and fixing the state and action space size as 50.

First of all, we  generate 9 different  MDPs ($ M_{11}\sim M_{33} $) as source tasks and then generate the new MDP $ M_0 $.  Let $ M_{11},M_{12},M_{13}$ be different from $ M_0 $ in $ \gamma $ and the distance from $ M_{1\cdot} $ and $ M_0 $ increase as $ M_{11} < M_{12} < M_{13} .$       Similarly,  MDPs $ M_{21},M_{22},M_{23} $ is different from $ M_0 $ in $ r $, and   MDPs $ M_{31},M_{32},M_{33} $ is different from $ M_0 $ in $ P $. Then we run our algorithm to transfer the Q-function learned on these 9 source MPDs to the new MDP $ M_0 $. The result is shown in  Figure1a, 1b and  1c. Note that the  dash line $ Q $  is the  Q-learning algorithm with no transfer learning, and the  solid line with various markers are the TTQL algorithm.

Secondly, we design three   MDPs  $ M_{4}, M_{5}, M_{6}   $ as source task MDPs, and the distance between these MDPs and the target becomes larger and larger. Then we use TTQL to transfer the Q-function learning from them to new MDP $ M_0 $ with and without the safe condition.  The results is shown in  Figure1d, 1e and  1f. Note that $ W-SC $ means that the experiment is run with the safe condition and $  WO-SC $ means without the safe condition.

    We have the following observations. (1) TTQL method outperforms Q-learning in all experiments. (2) Running TTQL on the more similar MDPs will lead to the faster convergence rate. Note that the curve  in Figure \ref{fig3} are closed to each other. It is because the infinity norm of the $ P $ will be small because the scale of the $ P $ is small and is consistent with the Proposition \ref{propq*diff}.  (3) The safe condition is necessary to ensure the convergence of the algorithms in various situation. 
     All these observations are consistent with our theoretical findings.

	\section{Conclusion}
	
    In this paper, we proposed a new transfer learning in RL method  \emph{target transfer Q-learning}(TTQL). The method transfer the Q-function learned in the source task to the target of Q-learning in the new task   when the safe conditions are satisfied. We prove the TTQL method do converge with the safe condition and the convergence rate is quicker than Q-learning if the two MDPs are not faraway from each other. The theoretical analysis helps to design safe conditions which is key  to guarantee the convergence of TTQL. As far as we known, it is the first convergence rate guaranteed transfer leaning in reinforcement learning algorithm. In the future, we will apply the TTQL to the more complex tasks and study convergence rate for the TTQL with complex function approximation such as the neural network.

	\bibliography{accrl}
	\bibliographystyle{aaai}

\end{document}